\newtheorem{theorem}{Theorem}
\newtheorem{lemma}{Lemma}[section]
\theoremstyle{definition}
\newtheorem{definition}{Definition}
\newtheorem{example}{Example}[section]
\newsavebox \mybox
\newcommand \from \leftarrow
\newcommand \dom {{\operatorname*{dom}}}
\newcommand \unroll {{\operatorname*{unroll}}}
\renewcommand \cdots {{\cdot\!\cdot\!\cdot}}
\newcommand{\by}{{\bm{y}}}
\newcommand{\bx}{{\bm{x}}}
\newcommand{\bw}{{\bm{w}}}
\newcommand*{\Scale}[2][4]{\scalebox{#1}{\ensuremath{#2}}}
                \let\value=\pgfmathresult
                \xdef\temp{%
                    \noexpand\pgfkeysalso{%
                        @cell content={%
                            \noexpand\cellcolor[rgb]{\pgfmathresult}%
                            \noexpand\definecolor{mapped color}{rgb}{\pgfmathresult}%
                            \ifx\textcolorvalue\empty
                            \else
                                \noexpand\color{\textcolorvalue}%
                            \fi
                            \the\toks0 %
                        }%
                    }%
                }%
\icmltitlerunning{Tensor Variable Elimination for Plated Factor Graphs}
\begin{document}

\twocolumn[
\icmltitle{Tensor Variable Elimination for Plated Factor Graphs}



\icmlsetsymbol{equal}{*}

\begin{icmlauthorlist}
\icmlauthor{Fritz Obermeyer}{equal,uber}
\icmlauthor{Eli Bingham}{equal,uber}
\icmlauthor{Martin Jankowiak}{equal,uber}
\icmlauthor{Justin Chiu}{harvard}
\icmlauthor{Neeraj Pradhan}{uber}
\icmlauthor{Alexander M. Rush}{harvard}
\icmlauthor{Noah Goodman}{uber,stanford}
\end{icmlauthorlist}

\icmlaffiliation{uber}{Uber AI Labs}
\icmlaffiliation{stanford}{Stanford University}
\icmlaffiliation{harvard}{Harvard University}

\icmlcorrespondingauthor{Fritz Obermeyer}{fritzo@uber.com}

\icmlkeywords{Tensors, Message Passing, Probabilistic Programming}

\vskip 0.3in
] 



\printAffiliationsAndNotice{\icmlEqualContribution} 

\tikzstyle{factor} = [rectangle, draw]
\tikzstyle{left plate caption} = [caption, node distance=0, inner sep=0pt,
below left=0pt and 0pt of #1.south west]
\tikzstyle{right plate caption} = [caption, node distance=0, inner sep=0pt,
below left=0pt and 0pt of #1.south east]

\begin{abstract}
A wide class of machine learning algorithms can be reduced to variable elimination on factor graphs.
While factor graphs provide a unifying notation for these algorithms, they do not provide a compact way to express repeated structure when compared to plate diagrams for directed graphical models.
To exploit efficient tensor algebra in graphs with plates of variables, we generalize undirected factor graphs to plated factor graphs and variable elimination to a tensor variable elimination algorithm that operates directly on plated factor graphs.
Moreover, we generalize complexity bounds based on treewidth and characterize the class of plated factor graphs for which inference is tractable.
As an application, we integrate tensor variable elimination into the Pyro probabilistic programming language to enable exact inference in discrete latent variable models with repeated structure.
We validate our methods with experiments on both directed and undirected graphical models, including applications to polyphonic music modeling, animal movement modeling, and latent sentiment analysis.
\end{abstract}

\section{Introduction} \label{sec:intro}

Factor graphs \cite{kschischang2001factor} provide a unifying
representation for a wide class of machine learning algorithms as
undirected bipartite graphs between variables and factors.  Factor
graphs can be used with both directed and undirected graphical models
to represent probabilistic inference algorithms performed by variable
elimination \cite{pearl1986fusion, lauritzen1988local}. In the most
common case, variable elimination is performed by sum-product
inference, but other variable elimination algorithms can be derived
through alternative semirings and adjoints.

In recent years researchers have exploited the equivalence of
sum-product on discrete factor graphs and tensor contraction
\cite{hirata2003tensor,smith2018opt_einsum}. Standard tensor
contraction can provide efficient implementations of sum-product
inference and the tensor contraction operations can be generalized to
alternate semirings
\cite{kohlas2008semiring,belle2016semiring,khamis2016faq}.

Yet, a major downside of factor graphs as an intermediary
representation is that they discard useful information from
higher-level representations, in particular, \textit{repeated structure}.
Directed graphical models explicitly denote repeated structure through
plate notation \cite{buntine1994operations}. Plates have not seen
widespread use in factor graphs or their inference algorithms (\cite{dietz2010directed} being an exception for directed factor graphs).
Nor have plates been exploited by tensor contraction, despite the
highly parallel nature of variable elimination algorithms.
This gap can result in suboptimal algorithms, since 
repeated structure can provide information that can be directly exploited for inference optimizations.

In this work we consider the class of \emph{plated factor graphs} and the
corresponding tensor variable elimination algorithms. We propose a
natural definition for plated factor graphs and lift a
number of classic results and algorithms from factor graphs to the
plated setting. In particular, we generalize treewidth-based bounds
on computational complexity for factor graphs to bounds depending on
plate sizes, and characterize the boundary between plated factor graphs
leading to computational complexity either polynomial or exponential
in the sizes of plates in the factor graph.

We consider several different applications of these techniques.
First we describe how plated factor graphs can provide an efficient
intermediate representation for generative models with discrete
variables, and incorporate a tensor-based contraction into the
\textit{Pyro} probabilistic programming language. Next, we develop
models for three real-world problems: polyphonic music modeling,
animal movement modeling, and latent sentiment analysis. These models combine directed networks (Bayesian networks), undirected
networks (conditional random fields), and deep neural networks. We
show how plated factor graphs can be used to concisely
represent structure and provide efficient general-purpose
inference through tensor variable elimination.

\section{Related Work} \label{sec:related}

Dense sum-product problems have been studied by the HPC community under the name tensor contraction.
\citet{hirata2003tensor} implemented a Tensor Contraction Engine for performing optimized parallel tensor contractions, an instance of dense sum-product message passing.
\citet{solomonik2014massively} implement a similar framework for large-scale distributed computation of tensor contractions.
\citet{wiebe2011einsum} implemented a popular interface \lstinline$np.einsum$ for performing tensor contractions in NumPy.
\citet{smith2018opt_einsum} implement a divide-and-conquer optimizer for \lstinline$einsum$ operations; we extended their implementation and use it as a primitive for non-plated variable elimination throughout this paper. \citet{abseher2017improving} also address the problem of finding efficient junction tree decompositions.

Sparse sum-product problems have a long history in database query optimization, as they can be seen as a specific type of a join-groupby-aggregate query.
\citet{kohlas2008semiring} define an abstract framework for variable elimination over valuations and semirings.
\citet{khamis2016faq} formulate an abstract variable elimination algorithm, prove complexity bounds, and connect these algorithms to database query optimization. Lifted inference algorithms \cite{taghipour2013lifted} developed for probabilistic databases are also concerned with extending classical methods for sum-product problems to exploit repeated structure or symmetry in graphical models.



In the context of probabilistic inference,
\citet{bilmes2010dynamic} leverages repeated (typically dynamic) structure in graphical models to quickly compute a variable elimination schedule that is then executed sequentially.
By contrast our algorithm addresses the narrower class of models which exclude dependencies between plate instances, and can thereby compute a schedule for parallel variable elimination.
Infer.Net \citep{InferNET18} introduces a \lstinline{ForEach} construct that enables parallelization; our \lstinline{pyro.plate} construct similarly enables parallism but also declares statistical independence.

\section{Model: Plated Factor Graphs} \label{sec:graphs}

\begin{definition}
A \emph{factor graph} is a bipartite graph $(V,F,E)$ whose vertices are either variables $v\in V$ or factors $f\in F$, and whose edges $E\subseteq V\times F$ are pairs of vertices. We say factor $f$ involves variable $v$ iff $(v,f)\in E$.
Each variable $v$ has domain $\operatorname{dom}(v)$, and each factor $f$ involving variables $\{v_1,\dots,v_K\}$ maps values $\underline x\in\operatorname{dom}(v_1)\times\cdots\times \operatorname{dom}(v_K)$ to scalars.
\end{definition}

In this work we are interested in discrete factor graphs where variable domains are finite and factors $f$ are tensors with one dimension per neighboring variable.
The key quantity of interest is defined through the sum-product contraction,
\begin{align*}
&\textsc{SumProduct}(F, \{v_1,\dots,v_K\}) = \\
&\; \sum_{x_1\in\dom(v_1)}\!\cdots\!\sum_{x_K\in\dom(v_K)}
  \prod_{f\in F} f[v_1\!=\!x_1,\dots,v_K\!=\!x_K]
\end{align*}
where we use named tensor dimensions and assume that tensors broadcast by ignoring uninvolved variables, i.e. if $(v,f)\notin E$ then $f[v=x,v'=x']=f[v'=x']$.


\begin{definition}
A \emph{plated factor graph} is a labeled bipartite graph $(V,F,E,P)$ whose vertices are labeled by the \emph{plates} on which they are replicated $P:V\cup F\to \mathcal P(B)$, where $B$ is a set of plates. We require that each factor is in each of the plates of its variables:
${\forall (v,f)\in E,\; P(v)\subseteq P(f)}$.\footnote{While the final requirement $P(f)\supseteq P(v)$ could perhaps be relaxed,  it is required if factors $f$ are to be represented as multi-dimensional tensors.}
\end{definition}

To instantiate a plated factor graph, we assume a map $M$ that specifies the number of times $M(b)$ to replicate each plate $b$.
Under this definition each plated factor is represented by a tensor with dimensions for both its plates and its involved variables. Using the same partial tensor notation above, we can access a specific grounded factor by indexing its plate dimensions, i.e.~$f[b_1 =i_1, \ldots, b_L=i_L]$, where for each dimension $ i_l \in \{1,\ldots, M(b_l)\}$ and $f[b= i, b'=i'] = f[b=i]$ if $b' \not \in P(f)$.   

The key operation for plated factor graphs will be ``unrolling'' to 
standard factor graphs.\footnote{This operation is conceptual: it is used in our definitions and proofs, but none of our algorithms explicitly unroll factor graphs.}
First define the following plate notation for either a factor or variable $z$: ${\cal M}_z(b) =  \{1, \ldots, M(b)\}$ if $ b \in P(z)$ and $\{1\}$ otherwise. 
This is  the set of indices that index into the replicated variable or factor.
Now define a function to unroll a plate, 
\[(V', F', E', P') = \unroll((V,F,E,P), M, b)\]
where $v_i$ indicates an unrolled index of $v$ and,
\begin{align*}
V' &=   \{v_i \ | \  v \in V,\  i \in {\cal M}_v(b) \}   \\
F' &= \{f_i \  |\  f \in F, \ i \in {\cal M}_f(b) \}  \\
E' &= \{ (v_i, f_j)\  |\ (v, f) \in E,\, i \in {\cal M}_v(b),\, j \in {\cal M}_f(b),  \\
&\hspace{2cm} \ (i = j) \lor b \not \in (P(v)\cap P(f))  \}  \\ 
P'(z) &= P(z) \setminus \{b\} 
\end{align*}

Sum-product contraction naturally generalizes to plated factor graphs via unrolling. We define:
\begin{align*}
\textsc{PlatedSumProduct}(G, M) \equiv \textsc{SumProduct}(F', V')
\end{align*}
where $F'$ and $V'$ are constructed by unrolling each plate $b$ in the 
original plated factor graph $G$.

\begin{example} \label{ex:tractable}
Consider a plated factor graph with two variables $X,Y$, three factors $F,G,H$, and two nested plates:

\begin{center}
\begin{tikzpicture}
\node[factor](F) {$F$};
\node[latent, xshift=1.25cm](X) {$X$};
\node[factor, xshift=2.5cm](H) {$H$};
\node[latent, xshift=3.75cm](Y) {$Y$};
\node[factor, xshift=5cm](G) {$G$};
\plate {a}{(H)}{$J$};
\plate [inner sep= 0.3cm] {b}{(H)(Y)(G)}{};
\node [right plate caption=b-wrap, yshift=-0.2cm, xshift=0.2cm] {$I$};

\draw (F) -- (X);
\draw (X) -- (H);
\draw (H) -- (Y);
\draw (Y) -- (G);
\end{tikzpicture}
\end{center}

Assuming sizes $I=2$ and $J=3$, this plated factor graph unrolls to a factor graph:
\begin{center}
\resizebox {.55\columnwidth} {!} {
\begin{tikzpicture}

\node[factor](F) {$F$};
\node[latent, xshift=1.25cm](X) {$X$};
\node[factor, xshift=2.5cm, yshift=1.5cm](H11) {$H_{11}$};
\node[factor, xshift=2.5cm, yshift=0.9cm](H12) {$H_{12}$};
\node[factor, xshift=2.5cm, yshift=0.3cm](H13) {$H_{13}$};

\node[factor, xshift=2.5cm, yshift=-0.3cm](H21) {$H_{21}$};
\node[factor, xshift=2.5cm, yshift=-0.9cm](H22) {$H_{22}$};
\node[factor, xshift=2.5cm, yshift=-1.5cm](H23) {$H_{23}$};

\node[latent, xshift=3.75cm, yshift=0.9cm](Y1) {$Y_1$};
\node[factor, xshift=5cm, yshift=0.9cm](G1) {$G_1$};
\node[latent, xshift=3.75cm, yshift=-0.9cm](Y2) {$Y_2$};
\node[factor, xshift=5cm, yshift=-0.9cm](G2) {$G_2$};

\draw (F) -- (X);
\draw[-] (X) to [bend left=20] (H11);
\draw[-] (X) to [bend left=10] (H12);
\draw (X) -- (H13);
\draw (H11) -- (Y1);
\draw (H12) -- (Y1);
\draw (H13) -- (Y1);
\draw (Y1) -- (G1);

\draw (X) -- (H21);
\draw[-] (X) to [bend right=10] (H22);
\draw[-] (X) to [bend right=20] (H23);
\draw (H21) -- (Y2);
\draw (H22) -- (Y2);
\draw (H23) -- (Y2);
\draw (Y2) -- (G2);
\end{tikzpicture}
} 
\end{center}
\end{example}

\begin{example} \label{ex:intractable}
Consider a plated factor graph with two variables, one factor, and two overlapping non-nested plates, denoting a Restricted Boltzmann Machine (RBM) \cite{smolensky1986information}:
\begin{center}
\begin{tikzpicture}
\node[latent](X) {$X$};
\node[factor, xshift=1.25cm](F) {$F$};
\node[latent, xshift=2.5cm](Y) {$Y$};
\plate [inner sep= 0.3cm, yshift=-0.05cm] {b}{(X)(F)}{};
\node [left plate caption=b-wrap, yshift=-0.2cm]{$I$};
\plate [inner sep= 0.3cm, yshift=0.2cm] {b}{(F)(Y)}{};
\node [right plate caption=b-wrap, yshift=1.1cm, xshift=0.2cm] {$J$};

\draw (X) -- (F);
\draw (F) -- (Y);
\end{tikzpicture}

\end{center}
Assuming sizes $I=2$ and $J=2$, this plated factor graph unrolls to a factor graph:
%



%
\begin{center}
\resizebox {.55\columnwidth} {!} {
\begin{tikzpicture}
\node[latent, xshift=1.25cm, yshift=1cm] (X1) {$X_1$};
\node[latent, xshift=1.25cm, yshift=0cm] (X2) {$X_2$};

\node[factor, xshift=2.5cm, yshift=1.2cm] (F11) {$F_{11}$};
\node[factor, xshift=3.75cm, yshift=1cm] (F12) {$F_{12}$};
\node[factor, xshift=2.75cm, yshift=0cm] (F21) {$F_{21}$};
\node[factor, xshift=4.0cm, yshift=-.2cm] (F22) {$F_{22}$};

\node[latent, xshift=5.25cm, yshift=1cm] (Y1) {$Y_1$};
\node[latent, xshift=5.25cm, yshift=0cm] (Y2) {$Y_2$};

\edge[-] {X1} {F11};
\draw[-] (X1) to [bend right=20] (F12);
\edge[-] {X2} {F21};
\draw[-] (X2) to [bend right=20] (F22);
\edge[-] {F22} {Y2};
\draw[-] (F12) to [bend left=10] (Y2);
\draw[-] (F11) to [bend left=20] (Y1);
\draw[-] (F21) to [bend right=10] (Y1);
\end{tikzpicture}
} 
\end{center}
\end{example}

\section{Inference: Tensor Variable Elimination} \label{sec:algorithm+complexity}

We now describe an algorithm---tensor variable elimination---for computing \textsc{PlatedSumProduct} on a tractable subset of plated factor graphs. 
We show that variable elimination cannot be generalized to run in polynomial time on all plated factor graphs and that 
the algorithm succeeds for exactly those plated factor graphs that can be run in polynomial time.
Finally, we briefly discuss extensions of the algorithm, including a plated analog of the Viterbi algorithm on factor graphs.


\subsection{An algorithm for tensor variable elimination} \label{sec:algorithm}

The main algorithm is formulated in terms of several standard functions:
$\textsc{SumProduct}(F,V)$, introduced above, computes sum-product contraction of a set of tensors $F$ along a subset $V$ of their dimensions (here always variable dimensions) via variable elimination%
\footnote{Available in many machine learning libraries as \lstinline$einsum$.};
$\textsc{Product}(f,\underline b,M)$ product-reduces a single tensor $f$ along a subset $\underline b$ of its dimensions (here always plate dimensions) 
\begin{align*}
&\textsc{Product}(f, \{b_1,\dots, b_L\}, M) = \\
&\hspace{1.5cm}
  \prod_{i_1=1}^{M(b_1)}\cdots\prod_{i_L=1}^{M(b_L)}
  f[b_1\!=\!i_1,\dots,b_L\!=\!i_L]
\end{align*}
and $\textsc{Partition}(V,F,E)$ separates a bipartite graph into its connected components.
\textsc{SumProduct} and \textsc{Product} each return a tensor, with dimensions corresponding to remaining plates and variables, unless all dimensions get reduced, in which case they return a scalar. Intuitively, \textsc{SumProduct} eliminates variables and \textsc{Product} eliminates plates.

At a high level, the strategy of the algorithm is to greedily eliminate variables and plates along a tree of factors.
At each stage it picks the most deeply nested plate set, which we call a leaf plate.
It eliminates all variables in exactly that plate set via standard variable elimination, producing a set of reduced factors.
Each reduced factor is then replaced by a product factor that eliminates one or more plates.
Repeating this procedure until no variables or plates remain, all scalar factors are finally combined with a product operation.

Algorithm~\ref{alg:plated-sumproduct} specifies the full algorithm. Elimination of variables and plates proceeds
by modifying the input plated factor graph $(V,F,E,P)$ and a partial result $S$ containing scalars.
Both loops preserve the invariant that $(V,F,E,P)$ is a valid plated factor graph and preserve the quantity
\begin{align*}
  &\textsc{PlatedSumProduct}((V,F,E,P), M) \\
  &\hspace{1cm} \times \textsc{SumProduct}(S,\{\}).
\end{align*}
At each leaf plate set $L\subseteq B$, the algorithm decomposes that plate set's factor graph into connected components.
Each connected component is \textsc{SumProduct}-contracted to a single factor $f$ with no variables remaining in the leaf plate set $L$.
If the resulting factor $f$ has no more variables, it is \textsc{Product}-reduced to a single scalar.
Otherwise the algorithm seeks a plate set $L'\subsetneq L$ where other variables of $f$ can be eliminated; $f$ is partially \textsc{Product}-reduced to a factor $f'$ that is added back to the plated factor graph, including edges from $V_f\times F_c$ that had been removed. Finally, when no more variables or plates remain, all scalar factors are product-combined by a trivial ${\textsc{SumProduct}(S,\{\})}$.
The algorithm can fail with \textbf{error} if the search for a next plate set $L'$ fails.

\newcommand \tab {{\phantom o}}
\begin{algorithm}[h]
  \caption{\textsc{TensorVariableElimination}}
  \label{alg:plated-sumproduct}
  {\bf input} variables $V$, factors $F$,
    edges $E\subseteq V\times F$, \\
    \phantom{\bf input} plate sets $P\!:\!V\cup F\to \mathcal P(B)$, \\
    \phantom{\bf input} plate sizes $M\!:\!B\to\mathbb N$.\\
  {\bf output} $\textsc{PlatedSumProduct}((V,\!F,\!E,\!P),M)$ or   \textbf{error}. \\
  Initialize an empty list of scalars $S\from []$.\\
  \While{$F$ \normalfont{is not empty}}{
    Choose a leaf plate set $L\in\{P(f)\mid f\in F\}$ \\
    \tab with a maximal number of plates.\\
    Let $V_L \from \{v \in V \mid P(v) = L\}$ be the variables in $L$.\\
    Let $F_L\from \{f \in F \mid P(f) = L\}$ be the factors in $L$.\\
    Let $E_L\from E\cap (V_L\times F_L)$ be the edges in $L$.\\
    \For{$(V_c,F_c)$ \textrm{\bf in} \textsc{Partition}$(V_L,F_L,E_L)$}{
      Let $f\from\textsc{SumProduct}(F_c,V_c)$.\\
      Let $V_f\from \{v\mid (v,f)\in E\cap((V\setminus V_c)\times F_c)\}$ \\
      \tab be the set of $f$'s remaining variables.\\
      Remove component $(V_c,F_c)$ from $V,F,E,P$.\\
      \eIf{$V_f$ \normalfont{is empty}}{
        Add $\textsc{Product}(f, L, M)$ to scalars $S$.
      }{
        Let $L'\from\bigcup\{P(v)\mid v\in V_f\}$ be the next\\
        \tab plate set where $f$ has variables.\\
        \lIf{$L'=L$}{\textbf{error}(``Intractable!'')}
        Let $f'\from\textsc{Product}(f, L\setminus L', M)$.\\
        Add $f'$ to $F,E,P$ appropriately.
      }
    }
  }
  \Return $\textsc{SumProduct}(S, \{\})$
\end{algorithm}

To help characterize when Algorithm~\ref{alg:plated-sumproduct} succeeds, we now introduce the concept of a graph minor.
\begin{definition}
A plated graph\footnote{When considering graph minors, we view factor graphs $(V,F,E)$ as undirected graphs $(V\cup F,E)$.} $H$ is a \emph{minor} of the plated graph $G$ if it can be obtained from $G$ by a sequence of edits of the form: deleting a vertex, deleting an edge, deleting a plate, or merging two vertices $u,v$ connected by an edge and in identical plates $P(u)=P(v)$.
\end{definition}
\begin{theorem} \label{thm:success}
Algorithm~\ref{alg:plated-sumproduct} succeeds iff $G$ has no plated graph minor $\left(\{u,v,w\},\, \left\{(u,v),(v,w)\right\},\, P\right)$ where $P(u)=\{a\}$, $P(v)=\{a,b\}$, $P(w)=\{b\}$, $a\ne b$, and $u,w$ both include variables.
\end{theorem}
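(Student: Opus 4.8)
The plan is to prove the biconditional by establishing the two implications separately. Write $H$ for the forbidden configuration: the two-edge path $u\!-\!v\!-\!w$ with $P(u)=\{a\}$, $P(v)=\{a,b\}$, $P(w)=\{b\}$, $a\ne b$, and $u,w$ variable-bearing. The \emph{easy} implication is the contrapositive ``the algorithm errors $\Rightarrow$ $G$ has $H$ as a minor,'' which yields ``no $H$-minor $\Rightarrow$ success.'' The \emph{hard} implication is ``$G$ has $H$ as a minor $\Rightarrow$ the algorithm errors,'' i.e.\ ``success $\Rightarrow$ no $H$-minor.'' I would prove the easy one in full and isolate the hard one as the crux.

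For the easy direction I would first record two structural facts about the algorithm: it never creates variables and never alters a variable's plate set (only \textsc{SumProduct} touches variables, and only by eliminating them), whereas every factor it creates has a plate set that can only \emph{shrink} over time (\textsc{Product} reduces $L$ to $L'\subsetneq L$, and \textsc{SumProduct} merges factors that all share the plate set $L$ into one of plate set $L$). Consequently, if the algorithm errors while contracting a leaf component to a factor $f$ of plate set $L$, then $f$ is the contraction of a connected ``blob'' $B$ of \emph{original} vertices of $G$, every one of which has a plate set containing $L$. The error condition gives external variables $V_f$ with $\bigcup_{v\in V_f}P(v)=L$ and each $P(v)\subsetneq L$; since a union of a chain of proper subsets of $L$ is again proper, the family $\{P(v)\}$ is not a chain, so it has an incomparable pair, yielding $u,w\in V_f$ and plates $a\in P(u)\setminus P(w)$, $b\in P(w)\setminus P(u)$ with $a\ne b$ and $a,b\in L$. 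I would then build $H$ as a minor of $G$ directly: delete every plate except $a,b$ (this makes every vertex of $B$ carry the identical plate set $\{a,b\}$, and turns $P(u)$ into $\{a\}$ and $P(w)$ into $\{b\}$), merge the now-uniform connected blob $B$ into a single vertex $v$ via legal same-plate merges, and delete all remaining vertices and edges except $u,v,w,(u,v),(v,w)$. The resulting minor is exactly $H$, with $u,w$ variable-bearing.

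For the hard direction I would work from a minor model of $H$ in $G$: disjoint connected branch sets $B_u,B_v,B_w$ that, after deleting all plates outside $\{a,b\}$, carry plate sets $\{a\}$, $\{a,b\}$, $\{b\}$ respectively, with $B_u,B_w$ each containing a genuine variable and with $G$-edges between $B_u,B_v$ and between $B_v,B_w$. The strategy is to follow the execution and show that the algorithm cannot avoid an iteration in which it selects a leaf plate set $L\supseteq\{a,b\}$ for (a descendant factor of) $B_v$ while a variable witnessing plate $a$ but not $b$ and a variable witnessing plate $b$ but not $a$ are both still attached; at such an iteration $L'\supseteq\{a,b\}$, and I would argue the process is forced down until $L=\{a,b\}$, so that $L'=L$ and the algorithm errors. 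The key tension to exploit is that plate $a$ can be \textsc{Product}-reduced out of the $B_v$-descendant only when no attached variable lives in plate $a$, and symmetrically for $b$, so the crossing arms $B_u$ (plate $a$, not $b$) and $B_w$ (plate $b$, not $a$) block the two reductions and cannot both be cleared first.

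I expect this preservation argument to be the main obstacle. Because the algorithm's own contractions and product-reductions are themselves minor operations, one cannot simply assert that the forbidden minor ``survives'' each non-erroring iteration -- the working graph is strictly smaller and a minor could in principle be destroyed. Moreover the true plate sets in $G$ carry plates absent from $H$, so the maximal-plate selection rule may reorder or defer eliminations relative to the idealized minor, and the invariant must be robust to the algorithm's tie-breaking among equally deep leaf plate sets. The clean way I would discharge this is to maintain an explicit loop invariant -- that, until the first error, the working graph still contains a witnessing copy of $H$ whose middle blob sits in a maximal remaining plate set and whose two arms remain in incomparable single plates -- and to verify it by a short case analysis on how the chosen leaf plate set $L$ meets $\{a,b\}$, the anchor case being the transparent one above where $L=\{a,b\}$ is selected, $V_L$ is empty of middle-blob variables, and the contraction immediately returns $L'=\{a,b\}=L$.
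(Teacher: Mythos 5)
Your first direction (``the algorithm errors $\Rightarrow$ the forbidden minor exists,'' equivalently ``no minor $\Rightarrow$ success'') is correct and coincides with the paper's argument: from the failure condition $L'=L$ with every remaining variable's plate set a proper subset of $L$, one extracts an incomparable pair $u,w\in V_f$ and plates $a\in P(u)\setminus P(w)$, $b\in P(w)\setminus P(u)$, and takes the contracted component as the middle vertex. Your version is in fact more careful than the paper's one-line ``letting $v=f$'': you justify that $f$ is the contraction of a connected set of original vertices whose plate sets all contain $L$, so that after deleting every plate other than $a,b$ the whole component becomes mergeable into a single vertex of plate set $\{a,b\}$. That correctly fills in what the paper leaves implicit.

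The converse (``minor $\Rightarrow$ error'') is where you have a real gap, and you have diagnosed it yourself. You correctly isolate the tension---plate $a$ cannot be \textsc{Product}-reduced from $B_v$'s descendant while a plate-$a$-but-not-$b$ variable is still attached, and symmetrically for $b$---but you then propose to maintain a loop invariant asserting that a witnessing copy of $H$ survives every iteration, and you concede you cannot verify it. The paper's proof needs no such invariant; the missing idea is a \emph{provenance} (lineage) argument. Follow the single chain of descendants of the middle factor $f_v$: each iteration that touches it replaces it by one factor whose plate set is a subset of the previous one, strictly smaller at each \textsc{Product} step, so the plate sets along this lineage form a chain under inclusion. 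A variable $u$ can only be summed out at an iteration whose leaf plate set is exactly $P(u)$, and since that leaf is chosen maximal, every factor then adjacent to $u$---including $f_v$'s current descendant---must have plate set exactly $P(u)$ at that moment. Hence, unless the algorithm has already errored, the lineage of $f_v$ must at one time carry plate set $L_u=P(u)$ with $a\in L_u$, $b\notin L_u$, and at another time plate set $L_w=P(w)$ with $b\in L_w$, $a\notin L_w$. Two plate sets on the same lineage are comparable, which contradicts $a\in L_u\setminus L_w$ and $b\in L_w\setminus L_u$. This closes the direction in a few lines and sidesteps the preservation problem you were worried about; without it (or a genuine verification of your invariant) your proof of ``success $\Rightarrow$ no minor'' is incomplete.
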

\begin{proof}
See Appendix~\ref{sec:proof-success}.
\end{proof}
This plated graph minor exclusion property essentially excludes the RBM Example 3.2 above, which is a minimal example of an intractable input.%
\footnote{See Appendix~\ref{sec:walkthrough-1} for a detailed walk-through of Algorithm~\ref{alg:plated-sumproduct} on this model, leading to \textbf{error}.}
If Algorithm~\ref{alg:plated-sumproduct} fails, one could fall back to unrolling a plate and continuing (at cost exponential in plate size); we leave this for future research.

\subsection{Complexity of tensor variable elimination} \label{sec:complexity}


It is well known that message passing algorithms have computational complexity exponential in the treewidth of the input factor graph but only linear in the number of variables \cite{chandrasekaran2012complexity,kwisthout2010necessity}.
In this section we generalize this result to the complexity of plated message passing on tensor factors.
We show that for an easily identifiable class of plated factor graphs, serial complexity is polynomial in the tensor sizes of the factors, and parallel complexity is sublinear in the size of each plate.
Essentially this characterizes when the tensor size of one factor of a plated factor graph determines the treewidth of the unrolled non-plated factor graph: in the polynomial case, treewidth is independent of tensor size.

\begin{example} \label{ex:tractable-eqn}
Consider the plated factor graph of Example~\ref{ex:tractable} with nested plates.
We wish to compute,
\begin{align*}
&\sum_x\sum_{y_1}\cdots\sum_{y_I}
F_{x}
\Bigl[\prod_i G_{i,y_i}\Bigr]
\Bigl[\prod_{i,j} H_{i,j,x,y_i}\Bigr]
\\&\hspace{2cm}=\sum_x F_{x}
\prod_i \sum_{y_i} G_{i,y_i} \prod_j H_{i,j,x,y_i}
\end{align*}
where we are able to commute the sums over $y_i$ inside the product over $i$, thereby reducing an algorithm of cost exponential in $I$ to a polynomial-cost algorithm.
\end{example}

\begin{example} \label{ex:intractable-eqn}
Consider the plated factor graph of Example~\ref{ex:intractable} with overlapping, non-nested plates.
Although the plated factor graph is a tree of tensors, the unrolled factor graph has treewidth $O(I+J)$; hence complexity will be exponential in the tensor sizes $I,J$ of dimensions $i,j$.

We can reach the same conclusion from sum-product,
\begin{align*}
&\sum_{x_1}\cdots\sum_{x_I}
 \sum_{y_1}\cdots\sum_{y_J}
 \prod_{i,j} F_{i,j,x_i,y_j}
\\&\hspace{2cm}=
 \sum_{x_1}\cdots\sum_{x_I}
 \prod_{i,j}
 \sum_{y_j}F_{i,j,x_i,y_j}
\end{align*}
Here we cannot commute both Cartesian product summations inside the product and so the computation is necessarily exponential in $I,J$.
\end{example}

We now show Algorithm~\ref{alg:plated-sumproduct} accepts the largest class of plated factor graphs for which a polynomial time strategy exists.

\begin{theorem} \label{thm:complexity}
Let $G=(V,\,F,\, E,\, P\!:\!V\cup F\to\mathcal P(B))$ be a plated factor graph.
Assume variable domains have nontrivial sizes ${|\dom(v)|\ge 2,\,\forall v\in V}$.
Then Algorithm~\ref{alg:plated-sumproduct} succeeds on $G$ iff $\textsc{PlatedSumProduct}(G,M)$ can be computed with complexity polynomial in plate sizes $M\!:\!B\to\mathbb N$.
\end{theorem}
\begin{proof}[Proof sketch]
(see Appendix~\ref{sec:proof-complexity} for full proof).

($\Rightarrow$) Algorithm~\ref{alg:plated-sumproduct} has complexity polynomial in $M$.

($\Leftarrow$)
Appeal to \cite{kwisthout2010necessity} (which assumes the Exponential Time Hypothesis) to show that the unrolled factor graph must have uniformly bounded treewidth.
Apply Ramsey theory arguments to show there is a single ``plated junction tree'' with certain properties.
Show this tree can only exist if $F$ excludes the minor of Thm.~\ref{thm:success}, hence Algorithm~\ref{alg:plated-sumproduct} succeeds.
\end{proof}

When a plated factor graph is asymptotically tractable according to Thm.~\ref{thm:complexity}, it is also tractable on a parallel machine.

\begin{theorem} \label{thm:parallel}
If Algorithm~\ref{alg:plated-sumproduct} runs in sequential time $T$ when $M(b)\!=\!1,\,\, \forall b$, then it runs in time $T+O\left(\Sigma_b\log M(b)\right)$ on a parallel machine with $\Pi_b M(b)$ processors and perfect efficiency.
\end{theorem}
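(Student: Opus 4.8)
The plan is to exploit the observation that the control flow of Algorithm~\ref{alg:plated-sumproduct} --- which leaf plate set $L$ is chosen at each step, how each subgraph is partitioned into connected components, and which \textsc{SumProduct} and \textsc{Product} calls are issued --- depends only on the plated graph $(V,F,E,P)$ and never on the plate sizes $M$. Consequently the algorithm executes the exact same finite sequence of operations whether $M(b)=1$ for all $b$ or $M$ is arbitrary; only the dimension sizes of the intermediate tensors differ. Let $N$ denote the ($M$-independent) number of \textsc{Product} calls issued along this trace. Since each while-loop iteration strictly reduces the graph (a component of one or more factors becomes at most one factor, and a surviving factor $f'$ has its plate set strictly shrunk because $L'\subsetneq L$), $N$ is bounded by a polynomial in $|V|,|F|,|B|$. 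First I would establish this invariance of the trace, and then bound the parallel depth contributed by the two families of operations separately.

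For the \textsc{SumProduct} calls, the key point is independence across plate instances. Each call $\textsc{SumProduct}(F_c,V_c)$ eliminates only variables lying in the single leaf plate set $L$, so for every fixed assignment of the $\prod_{b\in L}M(b)$ plate indices the computation is literally the non-plated contraction; these assignments are mutually independent and can be run in parallel on $\prod_{b\in L}M(b)\le\prod_b M(b)$ processors. Hence the parallel depth of each plated \textsc{SumProduct} equals the sequential time of its $M\equiv 1$ counterpart, and the total depth these calls contribute is exactly the \textsc{SumProduct} portion of $T$, with the processors fully occupied.

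For the \textsc{Product} calls, I would replace the naive sequential product-reduction by a balanced binary reduction tree: reducing a tensor over plate dimensions $\{b_1,\dots,b_L\}$ touches $\prod_l M(b_l)$ entries per output cell and so completes in depth $O\!\left(\log\prod_l M(b_l)\right)=O\!\left(\sum_l\log M(b_l)\right)$. In the degenerate case $M\equiv 1$ each such reduction is trivial, of depth $O(1)$, so the \textsc{Product} calls contribute only $O(N)=O(1)$ to $T$. Summing the reduction depths over the whole trace gives at most $N\cdot\sum_b\log M(b)=O\!\left(\sum_b\log M(b)\right)$, since each call reduces over a subset of the plates and $N$ is a graph-dependent constant. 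Adding the two contributions yields total parallel time $\le T+O\!\left(\sum_b\log M(b)\right)$.

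The main obstacle I anticipate is the bookkeeping of processors and the precise sense of \emph{perfect efficiency}. I must check that the $\prod_b M(b)$ processors simultaneously suffice for every batched \textsc{SumProduct} (they do, since each needs only $\prod_{b\in L}M(b)$) and for each reduction tree, whose width is $\prod_l M(b_l)$ times a factor coming from the fixed, $M$-independent variable domains that is absorbed into constants. The remaining care is to confirm that the reductions interact cleanly with the residual variable and plate dimensions of a partially reduced factor $f'$, and to phrase efficiency correctly: during the dominant \textsc{SumProduct} phase the processors are fully utilized, and the only slack is the logarithmic reduction overhead, so the speedup over the sequential plated run is optimal up to the additive $O\!\left(\sum_b\log M(b)\right)$ term.
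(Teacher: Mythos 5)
Your proposal is correct and follows essentially the same route as the paper's (much terser) proof: the trace of Algorithm~\ref{alg:plated-sumproduct} is independent of $M$, the \textsc{SumProduct} calls parallelize perfectly across plate instances, and each \textsc{Product} call is a balanced reduction of depth $O\left(\Sigma_b \log M(b)\right)$. Your additional care in bounding the number of \textsc{Product} calls and checking processor counts is a harmless elaboration of the same argument.
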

\begin{proof}
Algorithm~\ref{alg:plated-sumproduct} depends on $M$ only through calls to \textsc{SumProduct} and \textsc{Product}.
\textsc{SumProduct} operates independently over plates, and hence parallelizes with perfect efficiency.
\textsc{Product} reductions over each plate $b$ incur only $O(\log M(b))$ time on a parallel machine.
\end{proof}
This result suggests that (tractable) plated factor graphs are a practical model class for modern hardware. See Sec.~\ref{sec:performance} for empirical verification of the computational complexity described in Thm.~\ref{thm:parallel}.

While tensor variable elimination enables parallelism without increasing arithmetic operation count, it also reduces 
overhead involved in graph manipulation.
A significant portion of runtime in \textsc{SumProduct} is spent on constructing a junction tree.
By exploiting repeated structure, tensor variable elimination can restrict its junction tree problems to much smaller factor graphs (constructed locally for each connected component of each plate set), leading to lower overhead and the opportunity to apply better heuristics.%
\footnote{E.g.~opt\_einsum uses an optimal strategy for factor graphs with up to four variables.}

\subsection{Generic tensor variable elimination}
\label{sec:adjoint}

Because Algorithm~\ref{alg:plated-sumproduct} is generic\footnote{In the sense of generic programming \cite{musser1988generic}} in its two operations (plus and multiply),
it can immediately be repurposed to yield other algorithms on plated factor graphs, for example a polynomial-time \textsc{PlatedMaxProduct} algorithm that generalizes the \textsc{MaxProduct} algorithm on factor graphs.

Further extensions can be efficiently formulated as adjoint algorithms,
which proceed by recording an adjoint compute graph alongside the forward computation and then traversing the adjoint graph backwards starting from the final result of the forward computation \cite{darwiche2003differential,eisner2016inside,azuma2017algebraic,belle2016semiring}.
These extensions include: computing marginal distributions
of all variables (generalizing the forward-backward algorithm);
maximum a posteriori estimation (generalizing Viterbi-like algorithms);
and drawing joint samples of variables (generalizing the forward-filter backward-sample algorithm).
Note that while most message passing algorithms need only assume that the sum and product operations have semiring structure \cite{kohlas2008semiring,khamis2016faq}, marginal computations additionally require division.

\section{Application to Probabilistic Programming}
\label{sec:applications}

Plated factor graphs provide a general-purpose intermediate representation for many applications in probabilistic modeling requiring efficient inference and easy parallelization.
To make tensor variable elimination broadly usable for these applications, we integrate it into two frameworks\footnote{Open-source implementations are available; see  \texttt{http://docs.pyro.ai/en/dev/ops.html}} and use both frameworks in our experiments.






\begin{figure}
    \centering
\resizebox {.45\columnwidth} {!} {
\begin{tikzpicture}
\tikzstyle{factor} = [square, draw, rounded corners]
\definecolor{mygrey}{RGB}{220,220,220}
\node[latent, xshift=1cm](X) {$X$};
\node[latent, fill=mygrey, xshift=2.5cm](Z) {$Z$};
\node[latent,  xshift=4cm](Y) {$Y$};
\plate [inner xsep=0.3cm, inner ysep=0.2cm, yshift=0.1cm] {a}{(Z)}{};
\node [right plate caption=a-wrap, yshift=0cm, xshift=0.2cm]{$J$};
\plate [inner sep=0.4cm, yshift=0.1cm, minimum height=1.9cm, minimum width=3.1cm, xshift=-0.1cm] {b}{(Z)(Y)}{};
\node [right plate caption=b-wrap, yshift=-0.2cm,xshift=.2cm] {$I$};
\draw[->] (X) -> (Z);
\draw[->] (Y) -> (Z);
\end{tikzpicture}
} 
\hspace{2pt}
\resizebox {.45\columnwidth} {!} {
\begin{tikzpicture}
\tikzstyle{factor} = [square, draw, rounded corners]
\definecolor{mygrey}{RGB}{220,220,220}
\node[latent, xshift=1cm](X) {$X$};
\node[latent, fill=mygrey, xshift=2.5cm](Z) {$Z$};
\node[latent,  xshift=4cm](Y) {$Y$};
\plate [inner xsep= 0.4cm, inner ysep=0.2cm, yshift=0.0cm] {a}{(X)(Z)}{};
\node [left plate caption=a-wrap, yshift=-0.05cm]{$I$};
\plate [inner xsep= 0.4cm, inner ysep=0.2cm, yshift=0.2cm] {b}{(Z)(Y)}{};
\node [right plate caption=b-wrap, yshift=1cm, xshift=.2cm]{$J$};
\draw[->] (X) -> (Z);
\draw[->] (Y) -> (Z);
\end{tikzpicture}
} 
        \caption{The tractable and intractable plated factor graphs in Examples~\ref{ex:tractable} and ~\ref{ex:intractable} arise from the plated graphical model on the left and right, respectively.  Here the two random variables $X,Y$ are unobserved and the random variable $Z$ is observed.        }
\label{fig:directed-examples}
\end{figure}
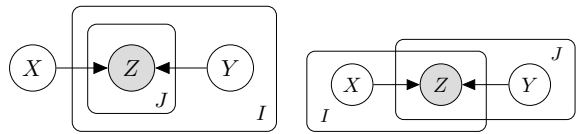

\subsection{Plated probabilistic programs in Pyro} \label{sec:pplsyntax}

First we integrate our implementation into the Pyro probabilistic programming language \cite{bingham2018pyro}.
This allows us to specify discrete latent variable models easily, programmatically constructing complex distributions and explicitly indicating repeated structure.
The syntax relies on a \lstinline$plate$ context manager.\footnote{We also introduce a \lstinline$markov$ context manager to deal with markov structure.}
Inside these plates,~\texttt{sample} statements are batched and assumed to be conditionally independent along the plate dimension. 
\begin{example} \label{ex:tractable-pyro}
  The directed graphical model in Fig.~\ref{fig:directed-examples} (left) can be specified by the Pyro program (see Appendix~\ref{sec:pyro-plate})
\begin{lstlisting}[language=Python]
def model(z):
  I, J = z.shape
  x = pyro.sample("x", Bernoulli(Px))
  with pyro.plate("I", I, dim=-2):
    y = pyro.sample("y", Bernoulli(Py))
    with pyro.plate("J", J, dim=-1):
      pyro.sample("z", Bernoulli(Pz[x,y]), 
                  obs=z)
\end{lstlisting}
\end{example}
To extract plated factor graphs from such programs without using static analysis,
we use a nonstandard interpretation of Pyro \lstinline$sample$ statements \cite{wingate2011nonstandard} as vectorized enumeration over each distribution's support.
That is, when running a program forward we create a  tensor at each sample site that lists all possible values of each distribution rather than draw random samples.
To avoid conflict among multiple enumerated variables, we dynamically assign each variable a distinct tensor dimension along which its values are enumerated, relying on array broadcasting \cite{walt2011numpy} to correctly combine results of multiple nonstandard sample values (as in \lstinline$Pz[x,y]$ above).



\subsection{Undirected graphical models using \lstinline$einsum$}
\label{sec:einsum}

A standard interface for expressing inference in undirected graphical models is \lstinline$einsum$ notation. Basic \lstinline$einsum$ computes the product of a collection of tensors, with some dimensions preserved in the final result and remaining dimensions summed out. This operation behaves analogously 
to $\textsc{SumProduct}$, with the inputs to \lstinline$einsum$ given as: a string denoting the tensor dimensions of all input and output tensors, and a list of tensors. The former gives the factor graph topology and the latter the factors themselves.
For example, \lstinline$einsum("xy,yz->xz",F,G)$ corresponds to matrix multiplication
for matrices \lstinline$F,G$.
This \lstinline$einsum$ expression also corresponds to reducing the $y$ variable in a factor graph with three nodes and two factors.

Plated \lstinline$einsum$ generalizes \lstinline$einsum$ notation, e.g.
\vspace*{-.25cm}
\begin{Verbatim}[fontsize=\small]
 einsum("xy,iyz->xz",F,G,plates="i") ==
 einsum("xy,yz,yz,yz->xz",F,G[0],G[1],G[2])
\end{Verbatim}
\vspace*{-.25cm}
where dimension \lstinline$x,z$ are preserved, variable \lstinline$y$ is sum-reduced, and plate \lstinline$i$ is product-reduced.
More generally, factors can include plated variables, e.g. Example~\ref{ex:tractable} can be written as
\vspace*{-.25cm}
\begin{Verbatim}[fontsize=\small]
 einsum("x,iy,ijxy->",F,G,H,plates="ij") ==
 einsum("x,y,z,xy,xy,xy,xz,xz,xz->",
        F,G[0],G[1],H[0,0],H[0,1],
        H[0,2],H[1,0],H[1,1],H[1,2])
\end{Verbatim}
\vspace*{-.25cm}
where the rightmost dimension of \lstinline$G$ and \lstinline$H$ denotes a distinct variable for each slice \lstinline$i$ (\lstinline$y$ and \lstinline$z$ in the unrolled version).
Thus the effective number of variables grows with the size of plate \lstinline$i$, but the plated notation requires only a constant number of symbols.
Formally we infer each variable's plate set $P(v)$ as the largest set consistent with the input string. In addition, this version 
of \lstinline$einsum$ implements generic tensor variable elimination, and so it can also be used to compute marginals and draw samples with the same syntax. 

\section{Experiments} \label{sec:experiments}

We experiment with plated factor graphs as a modeling language 
for three tasks: polyphonic music prediction, animal movement modeling 
and latent sentiment analysis.\footnote{See the supplementary materials for plate diagrams for
the models in Sec.~\ref{sec:hmmexp} and Sec.~\ref{sec:mehmmexp}.} Experiments consider different variants 
of discrete probabilistic models and their combination with neural neworks.

\subsection{Hidden Markov Models with Autoregressive Likelihoods}
\label{sec:hmmexp}
In our first experiment we train variants of a hidden Markov model\footnote{For an introduction see e.g.~reference \cite{ghahramani2001introduction}.} (HMM) on a polyphonic music modeling task \cite{boulanger2012modeling}. The data consist of sequences $\{\by_1, ..., \by_T \}$, where each $\by_t \in \{0, 1\}^{88}$ denotes the presence or absence of 88 distinct notes. 
This task is a common challenge for latent variable models such as continuous state-space models (SSMs), where one of the difficulties of inference is that training is typically stochastic, i.e. the latent variables need to be sampled.\footnote{This is true for all five reference models in this section, apart from the RTRBM.} In contrast, the discrete latent variable models explored here---each defined  through a plated
factor graph---admit efficient tensor variable elimination, obviating the need for sampling.

We consider 12 different latent variable models, where the emission likelihood for each note is replicated on a plate of size 88, so that notes at each time step are conditionally independent.
Writing these models as probabilistic programs allows us to easily experiment with dependency structure and parameterization, with variation along two dimensions:
\begin{enumerate}[nolistsep]
\item the dependency structure of the discrete latent variables
\item whether the likelihood $p(\by_t | \cdot)$ is autoregressive, i.e.~whether it depends on $\by_{t-1}$, and if so whether $p(\by_t | \cdot)$ is parameterized by a neural network
\end{enumerate}
\newcommand{\modelhmmjsb}{\small 8.28}
\newcommand{\modelfhmmjsb}{\small 8.40}
\newcommand{\modelpfhmmjsb}{\small 8.30}
\newcommand{\modelsohmmjsb}{\small 8.70}
\newcommand{\modelarhmmjsb}{\small 8.00}
\newcommand{\modelarfhmmjsb}{\small 8.22}
\newcommand{\modelarpfhmmjsb}{\small 8.39} 
\newcommand{\modelarsohmmjsb}{\small 8.19} 
\newcommand{\modelnnhmmjsb}{\small \bf 6.73}
\newcommand{\modelnnfhmmjsb}{\small 6.86}
\newcommand{\modelnnpfhmmjsb}{\small 7.07}
\newcommand{\modelnnsohmmjsb}{\small 6.78}
\newcommand{\modelhmmnottingham}{\small 4.49}
\newcommand{\modelfhmmnottingham}{\small 4.72}
\newcommand{\modelpfhmmnottingham}{\small 4.76}
\newcommand{\modelsohmmnottingham}{\small 4.96}
\newcommand{\modelarhmmnottingham}{\small 3.29}
\newcommand{\modelarfhmmnottingham}{\small 3.57}
\newcommand{\modelarpfhmmnottingham}{\small 4.82}
\newcommand{\modelarsohmmnottingham}{\small 3.34}
\newcommand{\modelnnhmmnottingham}{\small \bf 2.67}
\newcommand{\modelnnfhmmnottingham}{\small 2.82}
\newcommand{\modelnnpfhmmnottingham}{\small 2.81}
\newcommand{\modelnnsohmmnottingham}{\small 2.81}
\newcommand{\modelhmmpiano}{\small 9.41}
\newcommand{\modelfhmmpiano}{\small 9.55}
\newcommand{\modelpfhmmpiano}{\small 9.49}
\newcommand{\modelsohmmpiano}{\small 9.57}
\newcommand{\modelarhmmpiano}{\small 7.30}
\newcommand{\modelarfhmmpiano}{\small 7.36}
\newcommand{\modelarpfhmmpiano}{\small 9.57}
\newcommand{\modelarsohmmpiano}{\small \bf 7.11}
\newcommand{\modelnnhmmpiano}{\small 7.32}
\newcommand{\modelnnfhmmpiano}{\small 7.41}
\newcommand{\modelnnpfhmmpiano}{\small 7.47}
\newcommand{\modelnnsohmmpiano}{\small 7.29}
\begin{table}[t!]
\begin{center}
\resizebox {.6\columnwidth} {!} {
    \begin{tabu}{|c|[1pt]c|c|c|}    \hline
   \cellcolor[gray]{0.75} & \multicolumn{3}{c|}{\small Dataset \cellcolor[gray]{0.95}} \\  \hline
    \small Model \cellcolor[gray]{0.95} &  \small JSB & \small Piano  & \small Nottingham  \\  \tabucline[1pt]{-}
    \small \texttt{HMM} & \modelhmmjsb & \modelhmmpiano & \modelhmmnottingham \\ \hline
    \small \texttt{FHMM} & \modelfhmmjsb & \modelfhmmpiano & \modelfhmmnottingham  \\ \hline
    \small \texttt{PFHMM} & \modelpfhmmjsb & \modelpfhmmpiano & \modelpfhmmnottingham   \\
    \hline
    \small \texttt{2HMM} & \modelsohmmjsb & \modelsohmmpiano & \modelsohmmnottingham   \\
\tabucline[1pt]{-}
    \small \texttt{arHMM} & \modelarhmmjsb & \modelarhmmpiano & \modelarhmmnottingham   \\ \hline
   \small  \texttt{arFHMM} & \modelarfhmmjsb & \modelarfhmmpiano & \modelarfhmmnottingham  \\ \hline
    \small \texttt{arPFHMM} & \modelarpfhmmjsb & \modelarpfhmmpiano & \modelarpfhmmnottingham   \\
    \hline
    \small \texttt{ar2HMM} & \modelarsohmmjsb & \modelarsohmmpiano & \modelarsohmmnottingham   \\
\tabucline[1pt]{-}
    \small \texttt{nnHMM} & \modelnnhmmjsb & \modelnnhmmpiano & \modelnnhmmnottingham  \\ \hline
   \small \texttt{nnFHMM} & \modelnnfhmmjsb & \modelnnfhmmpiano & \modelnnfhmmnottingham   \\ \hline
    \small \texttt{nnPFHMM} & \modelnnpfhmmjsb & \modelnnpfhmmpiano & \modelnnpfhmmnottingham \\ \hline
    \small \texttt{nn2HMM} & \modelnnsohmmjsb & \modelnnsohmmpiano & \modelnnsohmmnottingham \\ 
    \hline
    \end{tabu}
} 
\end{center}
     \caption{Negative log likelihoods for HMM variants on three polyphonic music test datasets; lower is better. See Sec.~\ref{sec:hmmexp} for details.}
\label{table:hmm}
\end{table}
Dependency structures include a vanilla HMM (\texttt{HMM}); two variants of a Factorial HMM (\texttt{FHMM} \& \texttt{PFHMM}) \cite{ghahramani1996factorial}; and a second-order HMM (\texttt{2HMM}).\footnote{For the second-order HMM we use a parsimonious Raftery parameterization of the transition probabilities \cite{raftery1985model}.}
The models denoted by \texttt{arXXX} and \texttt{nnXXX} include an autoregressive likelihood: the former are explicitly parameterized with a conditional probability table, while the latter use a neural network to parameterize the likelihood.
(See the supplementary materials for detailed descriptions.)

We report our results in Table~\ref{table:hmm}. We find that our ability to iterate over a large class of models\footnote{See \texttt{\small https://git.io/fjc82} for a reference implementation for a selection of HMM variants.}---in particular different latent dependency structures, each of which requires a different message passing algorithm---proved useful, since different datasets preferred different classes of models. Autoregressive models yield the best results, and factorial HMMs perform worse
across most models.

We note that these classic HMM variants (upgraded with neural network likelihoods) are competitive with baseline state space models, outperforming STORN \cite{bayer2014learning} on 3 of 3 datasets,  LV-RNN \cite{gu2015neural} on 2 of 3 datasets, Deep Markov Model \cite{krishnan2017structured} on 2 of 3 datasets,
 RTRBM \cite{sutskever2009recurrent,boulanger2012modeling} on 1 of 3 datasets, and 
 TSBN \cite{gan2015deep} on 3 of 3 datasets.

\subsection{Hierarchical Mixed-Effect Hidden Markov Models}
\label{sec:mehmmexp}
In our second set of experiments, we consider models for describing the movement of populations of wild animals. 
Recent advances in sensor technology have made it possible to capture the movements of multiple animals in a population at high spatiotemporal resolution \cite{mcclintock2013combining}.
Time-inhomogeneous discrete SSMs, where the latent state encodes an individual's behavior state (like ``foraging'' or ``resting'') and the state transition matrix at each timestep is computed with a hierarchical discrete generalized linear mixed model, have become popular tools for data analysis thanks to their interpretability and tractability \cite{zucchini2016hidden,mcclintock2018momentuhmm}.

Rapidly iterating over different variants of such models, with nested plates and hierarchies of latent variables that couple large groups of individuals within a population, is difficult to do by hand
but can be substantially simplified by expressing models as plated probabilistic programs and performing inference with tensor variable elimination.\footnote{See \texttt{https://git.io/fjc8a} for a reference implementation.}

To illustrate this, we implement a version of the model selection process for movement data from a colony of harbour seals in the United Kingdom described in \cite{mcclintock2013combining},
fitting three-state hierarchical discrete SSMs with no random effects (\texttt{No RE}, a vanilla HMM), sex-level discrete random effects (\texttt{Group RE}), individual-level discrete random effects (\texttt{Individual RE}), and both sex- and individual-level discrete random effects (\texttt{Individual+Group RE}).
See the supplement for details on the dataset, models and training procedure.

\newcommand{\sealgnin}{\small $353\times 10^3$}
\newcommand{\sealgnid}{\small $341 \times 10^3$}
\newcommand{\sealgdin}{\small $342 \times 10^3$}
\newcommand{\sealgdid}{\small $341 \times 10^3$}
\begin{table}[t!]
\begin{center}
\resizebox {.55\columnwidth} {!} {

\begin{tabu}{|c|[1pt]c|}
    \hline
    \small Model \cellcolor[gray]{0.95} &  \small AIC \\  \tabucline[1pt]{-}

    \small \texttt{No RE (HMM)} & \sealgnin   \\ \hline
    \small \texttt{Individual RE} & \sealgnid \\ \hline
    \small \texttt{Group RE} & \sealgdin \\ \hline
    \small \texttt{Individual+Group RE} & \sealgdid \\ 
    \hline
\end{tabu}
} 
\end{center}
     \caption{Akaike Information Criterion (AIC) scores for hierarchical mixed effect HMM variants fit with maximum likelihood on animal movement data. Lower is better. See Sec.~\ref{sec:mehmmexp}.}
\label{tbl:mehmm}
\end{table}

We report AIC scores for all models in Table~\ref{tbl:mehmm}.
Although our models do not exactly match those in the original analysis,\footnote{See supplement for a discussion of differences.} our results support theirs in suggesting that including individual-level random effects is essential because there is significant behavioral variation across individuals and sexes that is unexplained by the available covariates.

\subsection{Latent Variable Classification}
\label{sec:sentclass}
We next experiment with model flexibility by designing a conditional random field \cite{lafferty2001crf}
model for latent variable classification on a sentiment classification task. Experiments use the Sentihood dataset \cite{sentihood}, which consists of sentences containing named location entities with sentiment labels along different aspects.
For a sentence $\mathbf{x} = \langle x_1,\ldots,x_T\rangle$ labels are tuples $(a, l, y)$ that contain  an aspect $a \in {\cal A} =  \left\{\textrm{general},\textrm{safety},\ldots\right\}$,
a location $l \in \cal{L} = \left\{\texttt{Location1},\texttt{Location2}\right\}$, and a sentiment $y \in \left\{\textrm{positive}, \textrm{negative}, \textrm{none}\right\}$.
The task is to predict the sentiment of a sentence given a location and aspect, for example $p(y \mid  \mathbf{x}, a=\text{price}, l=\text{\texttt{Location1}})$.


Standard approaches to this sentence-level classification task use neural network models to directly make sentence-level predictions. We instead propose a latent variable 
approach that explicitly models the sentiment of each word with respect to all locations and aspects. 
This approach can provide clearer insight into the specific 
reasoning of the model and also permits conditional inference.

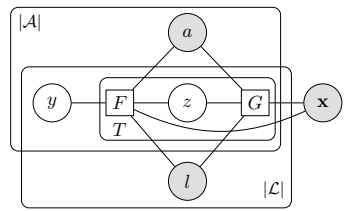
\begin{figure}[t]
\begin{center}
\resizebox {.55\columnwidth} {!} {
\begin{tikzpicture}
\node[latent] (Y) {$y$};
\node[factor, xshift=1.25cm] (FYS) {$F$};
\node[latent, xshift=2.5cm](S) {$z$};
\node[factor, xshift=3.75cm](FS) {$G$};
\node[xshift=1.25cm, yshift=-0.5cm]() {$T$};
\node[obs, xshift=5cm](X) {$\mathbf{x}$};
\node[obs, xshift=2.5cm, yshift=1.3cm] (A) {$a$};
\node[obs, xshift=2.5cm, yshift=-1.45cm] (L) {$l$};

\plate [inner sep=0.1cm, xshift=0cm, yshift=0.0cm] {t} {(FS)(S)(FYS)} {};
\plate [inner xsep= 0.3cm, inner ysep= 0.2cm, xshift=-0.1cm, yshift=-0.1cm] {a} {(Y)(FYS)(S)(A)(FS)} {};
\plate [inner xsep= 0.3cm, inner ysep= 0.1cm, xshift=0.1cm, yshift=0.2cm] {l} {(Y)(FYS)(S)(L)(FS)} {};
\node[caption, below left=-0.1cm and -0.3cm of a-wrap.north west] {$|\cal{A}|$};
\node[caption, below left=-0.5cm and -0.4cm of l-wrap.south east] {$|\cal{L}|$};

\draw (Y) -- (FYS);
\draw (X) -- (FS);
\draw[-] (X) to [bend left=25] (FYS);
\draw (FYS) -- (S);
\draw (S) -- (FS);
\draw (FS) -- (A);
\draw (FS) -- (L);
\draw[-] (FYS) to [] (A);
\draw[-] (FYS) to [] (L);
\end{tikzpicture}
} 
\end{center}
\caption{The graphical model for the sentiment analysis CRF in Sec.~\ref{sec:sentclass}. The aspect $a$ and location $l$ are observed, while the word-level sentiments $z_t$ and sentence-level sentiment $y$ for the particular aspect and location pair must be inferred.  See the supplementary material for the exact parameterization of the factors.}
\label{fig:sentcrf}
\end{figure}

Our conditional random field model is represented as a plated factor graph in Figure~\ref{fig:sentcrf}.
Here $\mathbf{z} = \langle z_1,\ldots,z_T\rangle$ is the latent word-level sentiment for fixed $l$ and $a$.
The two plated factors $G$ and $F$ represent the word aspect-location-sentiment potentials and the word-sentence potentials, respectively. 
We parameterize these factors with a bidirectional LSTM (BLSTM) over word embeddings of $\mathbf{x}$ whose initial state is given by an embedding of the location $l$ and aspect $a$.
We experiment with three variants:\footnote{See \texttt{https://github.com/justinchiu/sentclass} for a reference implementation.} 1) \texttt{CRF-LSTM-Diag} parameterizes $G$ with the output of the BLSTM and $F$ with a diagonal matrix; 2) \texttt{CRF-LSTM-LSTM} parameterizes both $G$ and $F$ with the output of the BLSTM; and 3) \texttt{CRF-Emb-LSTM} parameterizes $G$ with word embeddings and $F$ with the output of the BLSTM. As a baseline we reimplement the direct BLSTM model \texttt{LSTM-Final} model from \citet{sentihood}. See the supplementary material for full details.


The accuracy of our models is given in 
Table ~\ref{tbl:sentres}.\footnote{The results of previous work can be found summarized succinctly in \citet{rentnet}. Note that our goal is not to improve upon previous results (other models have higher accuracy); rather we aim to capitalize on the plated representation to infer latent word-level sentiment.} The CRF models all achieve similar performance to the baseline \texttt{LSTM-Final} model. However, as a result of the factor graph representation, we demonstrate the ability to infer word-level sentiment conditioned on a sentence-level sentiment as well as sparsity of the conditional word-level sentiments for the \texttt{CRF-Emb-LSTM} model. We provide an example sentence and the inferred conditional word sentiments for the \texttt{CRF-Emb-LSTM} model in Fig.~\ref{fig:condwordsent}.

\newcommand{\lracc}{\small 0.875}
\newcommand{\lrf}{\small 0.393}
\newcommand{\lstmacc}{\small 0.820}
\newcommand{\lstmf}{\small 0.689}
\newcommand{\senticacc}{\small 0.893}
\newcommand{\senticf}{\small 0.782}
\newcommand{\rentnetacc}{\small\bf 0.910} 
\newcommand{\rentnetf}{\small 0.785} 
\newcommand{\blstmacc}{\small 0.821}
\newcommand{\blstmf}{\small 0.780}
\newcommand{\crfnbacc}{\small 0.805}
\newcommand{\crfnbf}{\small 0.764}
\newcommand{\crfnblacc}{\small 0.843}
\newcommand{\crfnblf}{\small 0.799}
\newcommand{\crfacc}{\small 0.833}
\newcommand{\crff}{\small 0.779}


\begin{table}[t!]
\begin{center}
\resizebox {.49\columnwidth} {!} {

    \begin{tabu}{|c|[1pt]c|c|}
    \hline
    \cellcolor[gray]{0.75} & \multicolumn{2}{c|}{\small Metric \cellcolor[gray]{0.95}} \\
    \hline
    \small Model \cellcolor[gray]{0.95} &  \small Acc & \small F1 \\  \tabucline[1pt]{-}

   \small \texttt{LSTM-Final} & \blstmacc & \blstmf   \\ \hline
    \small \texttt{CRF-LSTM-Diag} & \crfnbacc & \crfnbf \\ \hline
    \small \texttt{CRF-LSTM-LSTM} & \crfnblacc & \crfnblf \\ \hline
    \small \texttt{CRF-Emb-LSTM} & \crfacc & \crff \\ 
    \hline
\end{tabu}
} 
\end{center}
     \caption{Test sentiment accuracies and aspect F1 scores for sentiment classification models on Sentihood.  Higher is better for both metrics. Both metrics are calculated by ignoring examples with gold labelled `none' sentiments; see Sec.~\ref{sec:sentclass} for details.}
\label{tbl:sentres}
\end{table}

\begin{figure}[t]
\centering
\small
\resizebox {.61\columnwidth} {!} {
\pgfplotstabletypeset[
    color cells={min=0,max=2.5},
    col sep=&,
    row sep=\\,
    /pgfplots/colormap={whiteblue}{rgb255(0cm)=(255,255,255); rgb255(1cm)=(0,0,164)},
    every head row/.style={
        before row={\hline},
        after row=\hline,
    },
    every last row/.style={
        after row=\hline,
    },
    columns/convenient/.style = {column type=c|},
    columns/sentiment/.style = {reset styles, string type, column type=|c|, column name = $y$},
    columns/ellipsis/.style = {column name = $\ldots$},
    fixed,
    skip 0.,
]{
    sentiment & Loc1 & ellipsis & but & not & too & convenient\\
    \Neutrey  & 0 & 1& .89 & .98 & .84 & 0\\
    \Smiley   & 0 & 0& 0   & 0   & 0   & .31\\
    \Sadey    & 1 & 0& .11 & .02 & .16 & .69\\
} }
\caption{The inferred word-level sentiment $z_t$ for aspect \textit{transit-location} and location \texttt{Loc1} conditioned on a \textit{negative} sentence-level sentiment. The ellipsis contains the elided words `is a great place to live', all of which get \textit{none} sentiment. The model assigns the \textit{none} sentiment to most words, preventing them from influencing the polarity of the sentence-level sentiment.}
\label{fig:condwordsent}
\end{figure}

\subsection{Performance Evaluation} \label{sec:performance}

To measure the performance of our implementation we use the benchmark plated model%
\footnote{See Appendix~\ref{sec:walkthrough-3} for a detailed walkthrough of Algorithm~\ref{alg:plated-sumproduct} on this model.}
of Fig.~\ref{fig:perf-model}.
We compute \textsc{PlatedSumProduct} and four different adjoint operations: gradient, marginal, sample, and MAP.
Figure~\ref{fig:runtime} shows results obtained on an Nvidia Quadro P6000 GPU.
We find that runtime is approximately constant until the GPU is saturated (at $I\times J \approx 10^4$), and runtime is approximately linear in $I\times J$ for larger plate sizes, empirically validating Thm~\ref{thm:parallel}.


\begin{figure}[ht!]
\begin{center}
\resizebox {.70\columnwidth} {!} {
\begin{tikzpicture}
\node[factor] (fx) {};
\node[latent, xshift=1.25cm] (x) {$X$};
\node[factor, xshift=2.5cm] (fyx) {};
\node[latent, xshift=4.5cm] (y) {$Y$};

\node[factor, xshift=1.25cm, yshift=0.8cm] (fwx) {};
\node[factor, xshift=4.5cm, yshift=0.8cm] (fzy) {};

\node[latent, xshift=1.25cm, yshift=1.6cm] (w) {$W$};
\node[factor, xshift=2.5cm, yshift=1.6cm] (fvw) {};
\node[latent, xshift=3.5cm, yshift=1.6cm] (v) {$V$};
\node[latent, xshift=4.5cm, yshift=1.6cm] (z) {$Z$};

\plate[inner xsep=0.3cm, xshift=-0.1cm, yshift=0.2cm]
    {a} {(fwx)(w)(fvw)(v)(z)(fzy)} {};
\node[caption, below left=-.3cm and -.1cm of a-wrap.north west] {$I$};

\plate[inner xsep=0.5cm, inner ysep=0.1cm, xshift=0.1cm, yshift=0cm]
    {b} {(fvw)(v)(z)(fzy)(y)(fyx)} {};
\node[caption, below left=-.15cm and -.5cm of b-wrap.south east] {$J$};

\draw (fx) -- (x);
\draw (x) -- (fyx);
\draw (fyx) -- (y);

\draw (x) -- (fwx);
\draw (fwx) -- (w);
\draw (y) -- (fzy);
\draw (fzy) -- (z);

\draw (w) -- (fvw);
\draw (fvw) -- (v);
\end{tikzpicture}
} 
\end{center}
\caption{
The tractable model used in our performance benchmark.
Note that this model would be intractable if there were a factor directly connecting variables $V$ and $Z$.
}
\label{fig:perf-model}
\end{figure}
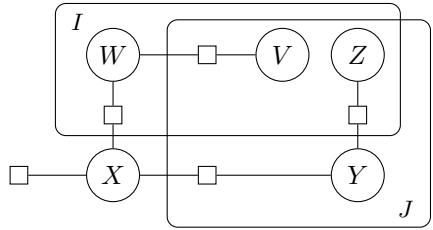

\begin{figure}[ht!]
\vskip 0.2in
\begin{center}
\centerline{
\includegraphics[width=.75\columnwidth]{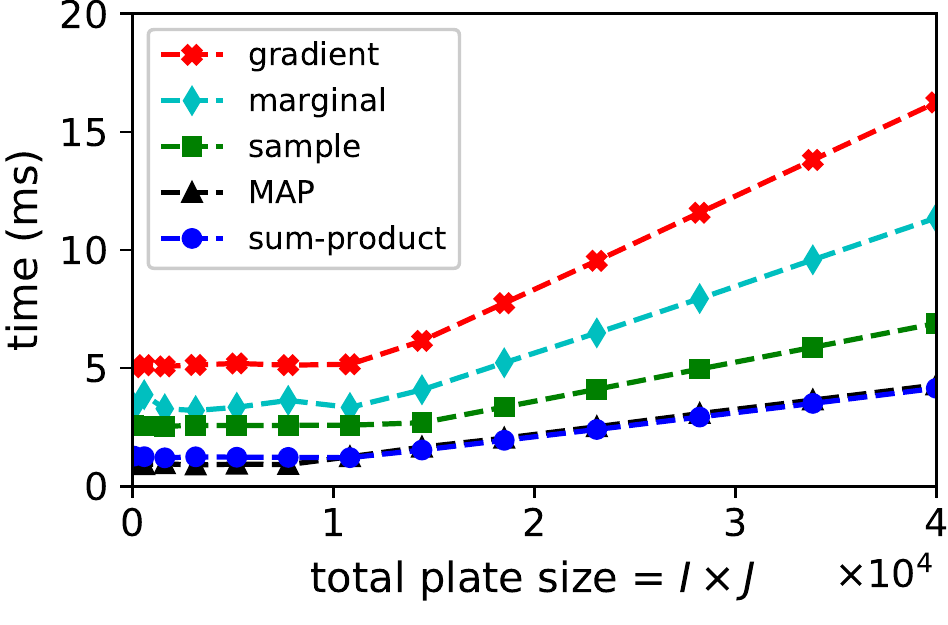}}
\caption{
Runtime of Algorithm~\ref{alg:plated-sumproduct} as we vary plate size (where $I=J$) and fix domain size $|\dom(v)|=32$ for all variables.
}
\label{fig:runtime}
\end{center}
\vskip -0.2in
\end{figure}

\section{Conclusion} \label{sec:conclusion}
This work argues for plated factor graphs as an intermediate representation that 
preserves shared structure for calculations with discrete random variables, and develops 
a tensor variable elimination algorithm for efficiently computing plated sum-product 
and related factor graph queries for a subset of plated models. We show how this approach 
can be used to compute key quantities for directed and undirected graphical models and demonstrate
its implementation as a general purpose intermediary representation for probabilistic programming with 
discrete random variables. Applications further demonstrate that this provides a simple, flexible, and 
efficient framework for working with discrete graphical models, and that these models can often outperform 
more complicated counterparts, while providing interpretable latent representations. The work itself 
is integrated into a widely used probabilistic programming system, and we hope it provides a framework 
for experiments incorporating discrete variable models into large-scale systems.

\section*{Acknowledgements}

We would like to thank many people for early discussions and feedback, including JP Chen, Theofanis Karaletsos, Ruy Ley-Wild, and Lawrence Murray.
MJ would like to thank Felipe Petroski Such for help with infrastructure for efficient distribution of experiments.
AMR and JC were supported by NSF \#1704834 and \#1845664.

\bibliography{enumeration}
\bibliographystyle{icml2019}


\newpage
\newpage

\appendix

\section{Proofs} \label{sec:proofs}

\subsection{Proof of Theorem~\ref{thm:success}} \label{sec:proof-success}

Note that while the ($\Leftarrow$) direction would follow from Theorem~\ref{thm:tfae} below, we provide an independent proof that does not rely on the Exponential Time Hypothesis.

\begin{proof}[Proof of Theorem~\ref{thm:success}]
($\Rightarrow$)
Suppose by way of contradiction that Algorithm~\ref{alg:plated-sumproduct} has failed.
Since $L=L'=\bigcup\{v\in V_f\mid v\in P(v)\}$ but $\forall v\in V_f,\, P(v)\subsetneq L$, there must be at least two distinct $u,w\in V_f$ and two distinct plates $a\ne b\in L$ such that $a\in P(u)\setminus P(w)$ and $b\in P(w)\setminus P(u)$.
Letting $v=f$ with $a,b\in P(v)$ provides the required graph minor $\left(\{u,v,w\},\, \left\{(u,v),(v,w)\right\},\, P\right)$.

($\Leftarrow$)
Suppose by way of contradiction that $G$ has such a graph minor $\left(\{u,v,w\},\, \left\{(u,v),(v,w)\right\},\, P\right)$.
Algorithm~\ref{alg:plated-sumproduct} must eventually either fail or reach a factor $f_u$ in leaf plate set $L_u\ni a$ where the plate $a$ is \textsc{Product}-reduced.
Similarly at another time, the algorithm must reach a factor $f_w$ in plate set $L_w\ni b$ where the plate $b$ is \textsc{Product}-reduced.

Consider the provenance of factors $f_u$ and $f_w$.
Since $v$ contains a common factor $f_v$ in both plates $a,b$, factors $f_u$ and $f_w$ must share $f_v$ as a common ancestor.
Note that in Algorithm~\ref{alg:plated-sumproduct}'s search for a next plate set $L'\subsetneq L$, the plate set is strictly decreasing.
Thus if $f_u$ and $f_v$ share a common ancestor, then either $L_u\subseteq L_w$ or $L_u\supseteq L_w$.
But this contradicts $a\in L_u\setminus L_w$ and $b\in L_w\setminus L_u$.
\end{proof}

\subsection{Proof of Theorem~\ref{thm:complexity}} \label{sec:proof-complexity}

We first define plated junction trees, then prove a crucial lemma, and finally prove a stronger form of Theorem~\ref{thm:complexity}.

\begin{definition} \label{def:junction-tree}
A \emph{junction tree} (or \emph{tree decomposition}) of a factor graph $(V,F,E)$ is a tree $(V_J,E_J)$ whose vertices are subsets of variables $V$ such that:
(\emph{i}) each variable is in at least one junction node (i.e.~ $\bigcup V_J = V$);
(\emph{ii}) for each factor $f\in F$, there is a junction vertex $v_J\supseteq \{v\in V\mid(v,f)\in E\}$ containing all of that factor's variables; and
(\emph{iii}) each variable $v\in V$ appears in nodes $\{v_J\in V_J\mid v\in v_J\}$ forming a contiguous subtree of $E_J$.
\end{definition}

\begin{definition} \label{def:plated-junction-tree}
A \emph{plated junction tree} of a plated factor graph $(V,F,E,P\!:\!V\cup F\to\mathcal P(B))$ is a plated graph $(V_J,\,E_J,\,P_J\!:\!V_J\to\mathcal P(B))$ where $(V_J,E_J)$ is a junction tree, $P_J(v_J)=\bigcup_{v\in v_J} P(v)$, and (\emph{iv}) each variable $v\in V$ appears in some junction vertex $v_J\ni v$ with exactly the same plates $P(v_J)=P(v)$.
\end{definition}

We extend unrolling from plated factor graphs to plated junction trees in the obvious way.
Note that a plated junction tree may unroll to a junction graph that is not a tree, since unrolling may introduce cycles.

\begin{definition}
The \emph{width} of a junction tree $T=(V_J,E_J)$ is $\operatorname{width}(T)=\max_{v_J\in V_J} |v_J|-1$.
The \emph{treewidth} of a factor graph $G$ is the width of its narrowest junction tree, ${\operatorname{treewidth}(G)=\min_{T\text{ of }G}\operatorname{width}(T)}$.
\end{definition}

\begin{lemma} \label{thm:crux}
Let $G=(V,\,F,\,E,\,P\!:\!V\cup F\to\mathcal P(B))$ be a plated factor graph and $W\in \mathbb N$.
If for every plate size assignment $M:B\to\mathbb N$ there is a junction tree $T_M$ of $\unroll(G,M,B)$ with $\operatorname{width}(T_M)\le W$, then there is a single plated junction tree $T$ of $G$ such that for every size assignment $M:B\to\mathbb N$, $\operatorname{width}(\unroll(T,M,B))\le W$.
\end{lemma}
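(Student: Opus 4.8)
The plan is to reduce the lemma to a purely combinatorial, $M$-independent statement and then build the required plated junction tree from a single, sufficiently large unrolling. First I would record the observation that makes the reduction possible: if $T=(V_J,E_J,P_J)$ is any plated junction tree, then each plated node $v_J\subseteq V$ unrolls, for a size assignment $M$, to the nodes $\{\,v_{\iota|_{P(v)}} : v\in v_J\,\}$ indexed by tuples $\iota$ over $P_J(v_J)$, and each such unrolled node has cardinality exactly $|v_J|$, since distinct original variables unroll to distinct vertices. Hence $\operatorname{width}(\unroll(T,M,B))=\max_{v_J\in V_J}|v_J|-1$ \emph{independently of $M$}. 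Consequently it suffices to exhibit one plated junction tree $T$ of $G$ in which every node has at most $W+1$ variables; its unrollings will then automatically have width $\le W$ for every $M$.

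To build such a $T$, I would fix a size assignment $M$ that is very large and equal across plates, $M(b)=N$, and take the guaranteed width-$\le W$ junction tree $T_M$ of $\unroll(G,M,B)$. The essential point is that $\unroll(G,M,B)$ carries a large symmetry group: permuting the indices within each plate independently (the product $\prod_{b\in B}S_N$) acts by graph automorphisms. The tree $T_M$ itself need not be symmetric, but because every node contains at most $W+1$ variables, only boundedly many index values from each plate can occur in any one node, so there are only finitely many possible ``types'' of nodes, where a type records which original variables appear together with the equality pattern of their plate indices. I would then apply a multidimensional (product) Ramsey theorem: choosing $N$ large enough forces large index subsets $A_b\subseteq\{1,\dots,N\}$, one per plate, on which $T_M$ is \emph{homogeneous}, meaning the type of any node, and the adjacency and separation relations among nodes, depend only on the order-type and equality pattern of the indices drawn from $\prod_b A_b$ and not on their specific values. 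Finiteness of the color set, which is exactly what the width bound $W$ buys us, is what makes Ramsey applicable here.

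Finally I would fold the homogeneous fragment into a plated junction tree by quotienting: two index tuples of the same equality pattern are identified, carrying each homogeneous node to a set of original variables $v_J$ with $P_J(v_J)=\bigcup_{v\in v_J}P(v)$ (the plates left ``free'' after the identification), and carrying edges to edges. Homogeneity guarantees this map is well defined on nodes and edges, that node sizes do not increase (so they remain $\le W+1$), and that the quotient is connected and acyclic. I would then verify the plated junction tree axioms: coverage (\emph{i}) and factor-containment (\emph{ii}) hold because a generic index tuple, available since the $A_b$ are large, exhibits a copy of every variable and factor of $G$; running intersection (\emph{iii}) transfers because a contiguous occupied subtree in $T_M$ maps to a contiguous subtree in the quotient; and the ``exact plate set'' property (\emph{iv}) can be forced after the fact by appending to any node containing $v$ a plated singleton node $\{v\}$ with $P_J=P(v)$, an operation that never raises the width.

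The hard part will be the Ramsey step together with the verification of (\emph{iii}). The difficulty is twofold: first, obtaining homogeneity \emph{simultaneously} across all $|B|$ plates requires a product form of Ramsey's theorem and a careful choice of coloring; second, and more delicate, the coloring must capture not only the composition of individual nodes but enough of the tree's connectivity that, after collapsing symmetric tuples, no variable's occupied subtree is broken into disconnected pieces, since a path in $T_M$ between two nodes sharing a variable may traverse nodes indexed by values outside the chosen bounded window. Controlling these separators so that running intersection survives the quotient, while keeping the number of colors finite, is the technical crux of the argument.
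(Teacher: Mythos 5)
Your overall architecture---take a junction tree of a large symmetric unrolling, use Ramsey-type arguments to extract a homogeneous fragment, and roll it back up into a plated junction tree---is the same as the paper's, and your opening observation that $\operatorname{width}(\unroll(T,M,B))=\max_{v_J}|v_J|-1$ independently of $M$ is exactly how the paper closes its argument. But the proposal leaves a genuine gap at the step you yourself flag as the crux, and it is not merely technical. You assert that ``homogeneity guarantees \ldots\ that the quotient is connected and acyclic'' and that running intersection ``transfers.'' Neither follows from homogeneity alone: quotienting a tree by identifying nodes can create cycles (a path $a\dash b\dash c\dash d$ with $a\sim d$ becomes a $3$-cycle), and a variable's occupied subtree in $T_M$ may be connected only through nodes indexed outside your chosen window $\prod_b A_b$, so contiguity can break when you restrict and collapse. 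These are precisely the points where the paper does real work: it first uses a Ramsey argument on \emph{pairs} of plate indices to force color $0$ (no junction node mixes two distinct indices of the same plate, since otherwise contiguity would force some node to exceed width $W$), then eliminates between-index edges (a complete bipartite pattern on $m\ge 3$ indices would contradict treeness), and finally removes duplicate nodes by an explicit edge-rewiring-and-merging argument along the path $u\dash v\dash\cdots\dash w$, where acyclicity is preserved because the intermediate node $v$ is strictly more deeply plated. Without an analogue of these three steps, your quotient map is not known to produce a junction tree at all.

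A secondary difference: you propose handling all plates simultaneously via a multidimensional product Ramsey theorem, whereas the paper inducts on $|B|$, peeling off one plate at a time so that only ordinary pigeonhole and Ramsey on pairs for a single index set are ever needed. The induction also lets the paper phrase its hypothesis as ``for each $m$ there is a plated junction tree $T_m$ of the partial unrolling whose further unrollings have width $\le W$,'' which is what makes the roll-up at each stage well-founded. Your single-shot version could likely be made to work, but you would need to make the coloring precise (color index tuples by the set of realized node compositions and adjacency patterns, as the paper does per plate) and then still supply the deduplication and edge-elimination arguments above; as written, the proposal identifies where the difficulty lies but does not resolve it.
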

\begin{proof}
By induction on $|B|$ it suffices to show that, splitting off a single plate $B=\{b\}\cup B'$, if there is a family of plated junction trees $\{T_m\mid m\in\mathbb N\}$ satisfying
\begin{equation}\label{eq:hypothesis}
\begin{aligned}
    \forall m\in\mathbb N,\;
    \exists T_m \text{ of } \unroll(G, M, b),\;
    \forall M'\!:\!B'\to\mathbb N,\;&
    \\
    \operatorname{width}(\unroll(T_m, M',B'))\le W&
\end{aligned}
\end{equation}
then for a single plated junction tree $T$,
\begin{equation}\label{eq:conclusion}
\begin{aligned}
    &\exists T \text{ of } G,\;
    \forall M\!:\!\{b\}\cup B'\to\mathbb N,\;
    \\
    &\hspace{1cm}\operatorname{width}(\unroll(T,M,\{b\}\cup B'))\le W
\end{aligned}
\end{equation}
Thus let us assume the hypothesis (\ref{eq:hypothesis}) and prove (\ref{eq:conclusion}).

Our strategy is to progressively strengthen the hypothesis (\ref{eq:hypothesis}) by forcing symmetry properties of each $T_m$ using Ramsey-theory arguments.
Eventually we will show that for any $m$, there is a plated junction tree $T_m$ satisfying hypothesis (\ref{eq:hypothesis}) that is the unrolled plated junction tree $\unroll(T, M, b)$ for some plated junction tree $T$.
Choosing $m\ge C$, it follows that $T$ satisfies (\ref{eq:conclusion}).

First we force $T_m$ to have nodes that are symmetric along plate $b$.
Split variables $V_1$ into plated ${V_p=\{v\in V_1\mid b\in P(v)\}}$ and unplated ${V_u=\{v\in V_1\mid b\notin P(v)\}}$ sets.
For each $S_u\subseteq V_u$ and $S_p\subseteq V_p$ color all plate indices $i\in{\{1,\dots,m\}}$ by a coloring function
\[
  C(i) = \begin{cases}
    1 &\exists v_J\in V_J,\, v_J\cap(V_u\cup V_p[i])=S_u\cup S_p[i]
    \\ 0 &\text{otherwise}
  \end{cases}
\]
where we use the shorthand $X[i] = \{x[i]\mid x \in X\}$.
By choosing $m'\ge 2m-1$, we can find $T_{m'}$ with a subset of $m$ plate indices all of a single color.
Rename these plate indices to construct a new $T_m$ whose vertices are symmetric in this sense.

Next for each $u,v\in V_p$, color all pairs of plate indices $i< j\in\{1,\dots,m\}$ by a coloring function
\[
  C(i,j) = \begin{cases}
    1 &\exists v_J\in V_J,\, \{u[i],v[j]\}\subseteq v_J
    \\ 0 &\text{otherwise}
  \end{cases}
\]
By Ramsey's theorem \cite{ramsey2009problem} we can choose a sufficiently large $m'\ge R(m,m)$ such that $T_{m'}$ has a subset of $m$ plates such that all pairs have a single color.
Indeed by choosing $m\ge W$, we can force the color to be 0, since the contiguity property (iii) of Definition~\ref{def:junction-tree} would require a single junction tree node to contain at least $W$ vertices, violating our hypothesis.

At this point we have forced $V_J$ to be symmetric in plate indices up to duplicates, but duplicates may have been introduced in selecting out $m$ plate indices from a larger set $m'$ (e.g. upon removing plate index 4, $\{x\}\!-\!\{x,y[4]\}$ becomes $\{x\}\!-\!\{x\}$).
We now show that these duplicates can be removed by merging them into other nodes.

Let $u,w$ be two duplicate nodes in the plated junction tree $T_m$, so that $P_J(u)=P_J(w)$.
Let $u\!-\!v\!-\!\cdots\!-\!w$ be the path connecting $u,w$ (with possibly $v=w$).
By the contiguity property (iii) of Definition~\ref{def:junction-tree}, $v$ must have at least the variables of $u,w$, hence must have at least as deep plate nesting, i.e.~ $P_J(v)\supseteq P_J(u)$.
Replace the edge $u\!-\!v$ with a new edge $u\!-\!w$.
No cycles can have been created, since $v$ is more deeply plated than $w$.
No instances of the forbidden graph minor can have been created, since the new edge $(u,v)$ lies in a single plate.
Indeed hypothesis~\ref{eq:hypothesis} is still preserved, and symmetry of $V_J$ is preserved.
Merge $u$ into $w$.
Again the hypothesis and symmetry are preserved.
Iterating this merging process we can force $T_m$ to have no duplicate nodes.

Now since $V_J$ is symmetric in plate $b$ and $P_J$ is defined in terms of $V_J$%
, also $P_J$ must be symmetric in plate $b$.
We next force $E_J$ to be symmetric along plate $b$.

For each $S_u,S'_u\subseteq V_u$ and $S_p,S'_p\subseteq V_p$, color all plate indices $i\in\{1\,\dots,m\}$ by a coloring function
\[
  C(i) = \begin{cases}
    1 & (S_u\cup S_p[i],\,S'_u\cup S'_p[i])\in E \\
    0 & \text{otherwise}
  \end{cases}
\]
By choosing $m'\ge 2m-1$, we can find a $T_{m'}$ with a subset of $m$ plate indices with symmetric within-plate-index edges.

For each $S_u,S'_u\subseteq V_u$ and $S_p,S'_p\subseteq V_p$, color all pairs of plate indices $i<j\in\{1,\dots,m\}$ by a coloring function
\[
  C(i,j) = \begin{cases}
    1 & (S_u\cup S_p[i],\,S'_u\cup S'_p[j])\in E \\
    0 &\text{otherwise}
  \end{cases}
\]
By Ramsey's theorem we can choose a sufficiently large $m'\ge R(m,m)$ such that $T_{m'}$ has a subset of $m$ plates such that all pairs have a single color.
Indeed we can force the color to be 0, since for $m\ge 3$ any complete bipartite graph would create cycles, violating the tree assumption.
Hence there are no between-plate-index edges.

At this point, we can construct a $T_m=(V_J,E_J,P_J)$ that is symmetric in plate indices and such that $E_J$ never contains edges between nodes with $b$-plated variables with different $b$ indices.
Hence $T_{\max(W,3)}$ can be rolled into a plated junction tree $T$ that always unrolls to a junction tree.
Finally, since $V_J$ never contains $b$-plated variables with more than one $b$ index,
\[
  \operatorname{width}(\unroll(T,M,B))
  \;=\; \operatorname{width}(\unroll(T,1,B))
  \;\le\; W
\]
\end{proof}

We now prove a strengthening of Theorem~\ref{thm:complexity}.

\begin{theorem} \label{thm:tfae}
Let $G=(V,\,F,\, E,\, P\!:\!V\cup F\to\mathcal P(B))$ be a plated factor graph with nontrivial variable domain $\forall v\in V,\, |\dom(v)|\ge 2$, and $M\!:\!B\to\mathbb N$ be plate sizes.
The following are equivalent:
\begin{enumerate}
    \item Algorithm~\ref{alg:plated-sumproduct} succeeds on $G$.
    \item $\textsc{PlatedSumProduct}(G,M)$ can be computed with complexity polynomial in $M$.
    \item The treewidth of $G$'s unrolled factor graph is (asymptotically) independent of plate sizes $M$.
    \item There is a plated junction tree of $G$ that unrolls to a junction tree for all plate sizes $M$.
    \item There is a plated junction tree of $G$ that has no plated graph minor $\left(\{u,v,w\},\, \left\{(u,v),(v,w)\right\},\, P\right)$ where $P(u)=\{a\}$, $P(v)=\{a,b\}$, $P(w)=\{b\}$, and $a\ne b$.
    \item $G$ has no plated graph minor $\left(\{u,v,w\},\, \left\{(u,v),(v,w)\right\},\, P\right)$ where $P(u)=\{a\}$, $P(v)=\{a,b\}$, $P(w)=\{b\}$, $a\ne b$, and $u,w$ both include variables.
\end{enumerate}
\end{theorem}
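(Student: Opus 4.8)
The plan is to prove the six statements equivalent through a single cycle of implications, $(1)\Rightarrow(2)\Rightarrow(3)\Rightarrow(4)\Rightarrow(5)\Rightarrow(6)\Rightarrow(1)$, exploiting the two results already in hand: Theorem~\ref{thm:success} supplies $(6)\Rightarrow(1)$ (indeed $(1)\Leftrightarrow(6)$) for free, and Lemma~\ref{thm:crux} is exactly the content of $(3)\Rightarrow(4)$. This leaves four links to establish.

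For $(1)\Rightarrow(2)$ I would simply account for the running time of Algorithm~\ref{alg:plated-sumproduct}: each call to \textsc{SumProduct} acts on a single connected component of one leaf plate set, a factor graph whose topology and treewidth are fixed independently of $M$, so its cost is a constant (times a $\dom$-factor); \textsc{Product} contributes a factor $M(b)$ per eliminated plate; and the grounded work is bounded by $\prod_b M(b)$ times these constants, hence polynomial in $M$. For $(2)\Rightarrow(3)$ I would argue by contraposition: \textsc{PlatedSumProduct}$(G,M)$ is a sum-product query on $\unroll(G,M,B)$, so if the treewidth of the unrolled graph were not uniformly bounded in $M$, the hardness result of \cite{kwisthout2010necessity} (under the Exponential Time Hypothesis, and using $|\dom(v)|\ge 2$ so that treewidth genuinely drives cost) would force superpolynomial time, contradicting (2); thus a single bound $W$ works for all $M$, which is (3). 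Step $(3)\Rightarrow(4)$ is Lemma~\ref{thm:crux} verbatim.

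The two remaining links are combinatorial. For $(4)\Rightarrow(5)$, take the plated junction tree $T$ guaranteed by (4) and show it cannot contain the forbidden minor: if it did, then instantiating $M(a)=M(b)=2$ the minor unrolls to the grid $u_i\!-\!v_{ij}\!-\!w_j$ whose nodes close an $8$-cycle, and since every plated-minor edit (vertex/edge/plate deletion, same-plate merge) pulls back to a minor edit on $\unroll(T,M,B)$ for an appropriate $M$ (instantiating deleted plates at size one), $\unroll(T,M,B)$ would contain a cycle, contradicting that $T$ unrolls to a tree for all $M$. For $(5)\Rightarrow(6)$ I would argue contrapositively: assuming $G$ has the forbidden minor with $u,w$ including variables, I exhibit the same minor in every plated junction tree. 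Concretely, there are variables $x$ (with $a\in P(x)$, $b\notin P(x)$), $z$ (with $b\in P(z)$, $a\notin P(z)$) and a factor $f$ with $\{a,b\}\subseteq P(f)$ joining them; property (\emph{ii}) places $f$'s variables in a common node, properties (\emph{iii})–(\emph{iv}) give nodes of plate exactly $\{a\}$ and $\{b\}$ containing $x$ and $z$, and contiguity forces the $x$- and $z$-subtrees to meet at a median node $c$ of plate $\{a,b\}$. After deleting every plate but $a,b$ and restricting to the tree path from the $\{a\}$-node through $c$ to the $\{b\}$-node, I recover the pattern $\{a\}\!-\!\{a,b\}\!-\!\{b\}$.

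The main obstacle is the junction-tree-to-graph bridge $(5)\Rightarrow(6)$ together with the hardness import $(2)\Rightarrow(3)$. In $(5)\Rightarrow(6)$ the subtlety is that plate sets along the connecting path may alternate between $\{a\}$ and $\{a,b\}$ (and between $\{b\}$ and $\{a,b\}$), so one cannot merge the path wholesale; the fix is to pick, on each side of $c$, the node \emph{closest to} $c$ whose plate is the pure singleton, merge the block of $\{a,b\}$-nodes between it and $c$ into a single middle vertex, and delete the alternating tail beyond it, leaving a clean three-node minor. In $(2)\Rightarrow(3)$ the delicate points are invoking the Exponential Time Hypothesis legitimately for this specific family of sum-product instances and making ``asymptotically independent of $M$'' precise as a single uniform width bound, which is exactly the hypothesis that Lemma~\ref{thm:crux} consumes.
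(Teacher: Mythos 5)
Your proposal follows essentially the same route as the paper: the identical cycle $(1)\Rightarrow(2)\Rightarrow(3)\Rightarrow(4)\Rightarrow(5)\Rightarrow(6)\Rightarrow(1)$, with the same ingredients at each link (runtime accounting for Algorithm~\ref{alg:plated-sumproduct}, the hardness result of \cite{kwisthout2010necessity} under ETH, Lemma~\ref{thm:crux}, cycle creation upon unrolling at $M(a)=M(b)=2$, lifting the forbidden minor from $G$ to any plated junction tree, and Theorem~\ref{thm:success}). The only difference is that you flesh out $(5)\Rightarrow(6)$ and $(4)\Rightarrow(5)$ in more detail than the paper's one-sentence dispatches, and your added details are consistent with the intended argument.
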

\begin{proof}
($1 \Rightarrow 2$)
Algorithm~\ref{alg:plated-sumproduct} has complexity polynomial in $M$, since both \textsc{SumProduct} and \textsc{Product} are polynomial in $M$ and the \textbf{while} loop is bounded independently of $M$.

($2 \Rightarrow 3$) Apply \cite{kwisthout2010necessity} to the unrolled factor graph.

($3 \Rightarrow 4$) Apply Lemma~\ref{thm:crux}.

($4 \Rightarrow 5$) Any plated junction tree with plated graph minor in (5) would unroll to a non-tree, since the minor would induce cycles when $M(a)\ge 2$ and $M(b)\ge 2$ (as in Example~\ref{ex:intractable}).
Hence the plated junction tree of (4) must satisfy (5).

($5 \Rightarrow 6$) If $G$ has such a plated graph minor, then any plated junction tree must have the corresponding plated graph minor.

($6 \Rightarrow 1$) Apply Theorem~\ref{thm:success}.
\end{proof}

\begin{proof}[Proof of Theorem~\ref{thm:complexity}]
Apply Theorem~\ref{thm:tfae} ($1\Longleftrightarrow 2$).
\end{proof}

\section{Experimental details} \label{sec:exp-details}

\subsection{Hidden Markov Models with Autoregressive Likelihoods}
The joint probability (for a single sequence) for the \texttt{HMM} model is given by 
\begin{equation}
\label{eqn:hmmjoint}
p(\bx_{1:T}, \by_{1:T}) = \prod_{t=1}^T p(\by_t | \bx_t) p(\bx_t | \bx_{t-1}) 
\end{equation}
where $\bx_{1:T}$ are the discrete latent variables and $\by_{1:T}$ is the sequence of observations. 
For all twelve models the likelihood is given by a Bernoulli distribution factorized over the 88 distinct notes.
The two Factorial HMMs have two discrete latent variables at each timestep: $\bx_t$ and $\bw_t$. They differ in the dependence structure of $\bx_t$ and $\bw_t$. In particular for the \texttt{FHMM} the joint probability is given by 
\begin{equation}
\label{eqn:fhmmjoint}
p(\bx_{1:T}, \by_{1:T}) = \prod_{t=1}^T p(\by_t | \bx_t, \bw_t) p(\bx_t | \bx_{t-1}) p(\bw_t | \bw_{t-1})
\end{equation}
i.e.~the dependency structure of the discrete latent variables factorizes at each timestep, 
while for the \texttt{PFHMM} (i.e.~Partially Factorial HMM) the joint probability is given by 
\begin{equation}
\label{eqn:fhmmjoint}
p(\bx_{1:T}, \by_{1:T}) = \prod_{t=1}^T p(\by_t | \bx_t, \bw_t) p(\bx_t | \bx_{t-1},\bw_{t} ) p(\bw_t | \bw_{t-1})
\end{equation}
All models correspond to tractable plated factor graphs (in the sense used in the main text) and admit efficient maximum likelihood gradient-based training.

The autoregressive models have the same dependency structure as in Eqn.~\ref{eqn:hmmjoint}-\ref{eqn:fhmmjoint}, with the difference that the likelihood term at each timestep has an additional dependence on $\by_{t-1}$. For the four \texttt{arXXX} models, this dependence is explicitly parameterized with a conditional probability table, with the likelihood for each note $p(y_{t,i}|\cdot)$ depending explicitly on $y_{t-1,i}$ (but not on $y_{t-1,j}$ for $j\ne i$). For the four \texttt{nnXXX} models this dependence is parameterized by a neural network so that $p(y_{t,i}|\cdot)$ depends on the entire vector of notes $\by_{t-1}$. In detail the computational path of the neural network is as follows. First, a 1-dimensional convolution with a kernel size of three and $N_{\rm channels}$ channels is applied to $\by_{t-1}$ to produce $\bm{c}_t$. We then apply separate affine transformations to the latent variables (either $\bx_t$ or $\bx_t$ and $\bw_t$) and ${\bf c}_t$, add together the resulting hidden representations, and apply a ReLU non-linearity. A final affine transformation then maps the hidden units to the 88-dimensional logits space of the Bernoulli likelihood.
We vary $N_{\rm channels} \in \{2, 4\}$ and fix the number of hidden units to 50.

We evaluate our models on three of the polyphonic music datasets considered in \citet{boulanger2012modeling}, using the same train/test splits. Each dataset contains at least 7 hours of polyphonic
music; after pre-processing each dataset consists of $\mathcal{O}(100-1000)$ sequences, with each sequence containing $\mathcal{O}(100-1000)$ timesteps. For each model we do a grid search over hyperparameters and train the model to approximate convergence and report test log likelihoods on the held-out test set (normalized per timestep). For all models except for the second-order HMMs we vary the hidden dimension $D_h \in \{9, 16, 25, 36\}$. For the Factorial HMMs $D_h$ is interpreted as the size of the entire latent space at each timestep so that the dimension of each of the two discrete latent variables $\bx_t$ and $\bw_t$ at each timestep is given by $\sqrt{D_h}$. For the second-order HMMs we vary the hidden dimension $D_h \in \{9, 12, 16, 20\}$ so as to limit total memory usage (which scales as $\mathcal{O}(D_h^\ell)$ for an $\ell^{\rm th}$-order HMM).

We use the Adam optimizer with an initial learning rate of $0.03$ and default momentum hyperparameters \cite{kingma2014adam}; over the course of training we decay the learning rate geometrically to a final learning rate of $3\times 10^{-5}$. For the JSB and Piano datasets we train for up to 300 epochs, while for the Nottingham dataset we train for up to 200 epochs. We follow (noisy) gradient estimates of the log likelihood by subsampling the data into mini-batches of sequences; we use mini-batch sizes of 20, 15, and 30 for the JSB, Piano, and Nottingham datasets, respectively. We clamp all probabilities to satisfy $p \ge p_{\rm min}=10^{-12}$ to avoid numerical instabilities. In order to better avoid bad local minima, for each hyperparameter setting we train with 4 (2) different random number seeds for  models without (with) neural networks in the likelihood, respectively. For each dataset we then report results for the model with the best training log likelihood.

\begin{figure}[ht!]
\begin{center}
\resizebox {.65\columnwidth} {!} {
\begin{tikzpicture}
\definecolor{mygrey}{RGB}{220,220,220}

\node[latent, fill=mygrey](Y1) {$y_{t,i}$};
\node[latent, fill=mygrey, right=of Y1](Y2) {$y_{t+1,i}$};
\node[latent, above=of Y1](Z1) {$x_{t}$};
\node[latent, above=of Z1,xshift=-6ex,yshift=1ex](W1) {$w_{t}$};
\node[latent, above=of Y2, right=of Z1](Z2) {$x_{t+1}$};
\node[latent, above=of Z2,xshift=-6ex,](W2) {$w_{t+1}$};
\node[const, left=of Z1](Z0) {$\cdots$};
\node[const, right=of Z2](Z3) {$\cdots$};
\node[const, left=of W1](W0) {$\cdots$};
\node[const, right=of W2](W3) {$\cdots$};

\edge {W1} {Z1,W2,Y1}
\edge {W2} {Z2,Y2}
\edge {Z0} {Z1}
\edge {W0} {W1}
\edge {W2} {W3}
\edge {Z1} {Z2,Y1}
\edge {Z2} {Y2,Z3}

\plate [inner xsep=0.5cm, inner ysep=0.2cm, yshift=0.0cm] {a}{(Z0)(Z3)(Z1)(Y1)(Z2)(Y2)(W1)(W2)}{}; 
\node [left plate caption=a-wrap]{$N$};

\plate[inner xsep=0.4cm, inner ysep=0.2cm, yshift=0.1cm]{a}{(Y1)}{};
\node [left plate caption=a-wrap]{$88$};

\plate[inner xsep=0.4cm, inner ysep=0.2cm, yshift=0.1cm]{a}{(Y2)}{};
\node [left plate caption=a-wrap]{$88$};

\end{tikzpicture}
}
\end{center}
\caption{
Plate diagram for the \texttt{PFHMM} in Sec.~\ref{sec:hmmexp}. The outermost plate encodes the independence among the $N$ time series, 
while the plates at each time step encode the fact that the likelihood term $p(y_t | \cdot)$ decomposes into a product of $88$ Bernoulli likelihood factors, one for each note $y_{t,i}$.
}
\label{fig:hmm-model}
\end{figure}
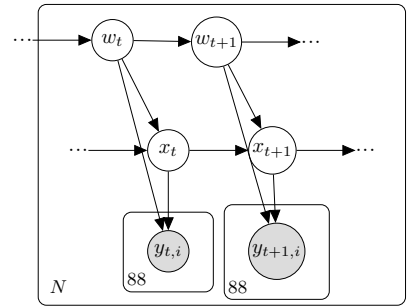

\subsection{Hierarchical Mixed Effect Hidden Markov Models}


\subsubsection{Harbour seal tracking dataset details}
We downloaded the data from \texttt{momentuHMM} (\cite{mcclintock2018momentuhmm}), an R package for analyzing animal movement data with generalized hidden Markov models. 
The raw datapoints are in the form of irregularly sampled time series (datapoints separated by 5-15 minutes on average) of GPS coordinates and diving activity
for each individual in the colony (10 males and 7 females) over the course of a single day
recorded by lightweight tracking devices physically attached to each animal by researchers.
We used the \texttt{momentuHMM} harbour seal example\footnote{\texttt{https://git.io/fjc8i}} preprocessing code
(whose functionality is described in detail in section 3.7 of \cite{mcclintock2018momentuhmm})
to independently convert the raw data for each individual into smoothed, temporally regular time series of step sizes, turn angles, and diving activity, saving the results and using them for our population-level analysis.

\subsubsection{Model details}

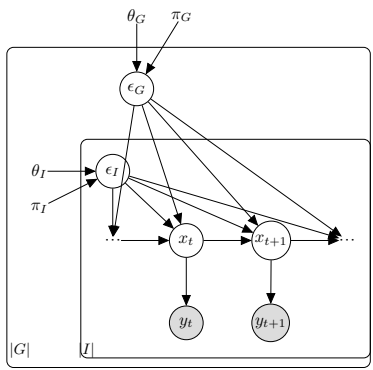
\begin{figure}[ht!]
\begin{center}
\resizebox {.6\columnwidth} {!} {
\begin{tikzpicture}
\definecolor{mygrey}{RGB}{220,220,220}

\node[latent, fill=mygrey](Y1) {$y_t$};
\node[latent, fill=mygrey, right=of Y1](Y2) {$y_{t+1}$};
\node[latent, above=of Y1](Z1) {$x_{t}$};
\node[latent, above=of Y2, right=of Z1](Z2) {$x_{t+1}$};
\node[const, left=of Z1](Z0) {$\cdots$};
\node[const, right=of Z2](Z3) {$\cdots$};

\edge {Z0} {Z1}
\edge {Z1} {Z2,Y1}
\edge {Z2} {Y2,Z3}

\node[latent, above=of Z0](EI) {$\epsilon_I$};
\node[const, left=of EI](TI) {$\theta_I$};
\node[const, below=0.5cm of TI](PI) {$\pi_I$};

\edge {EI} {Z0,Z1,Z2,Z3};
\edge {TI,PI} {EI}

\node[latent, left=of EI, above=of EI, xshift=0.5cm](EG) {$\epsilon_G$};
\node[const, above=of EG](TG) {$\theta_G$};
\node[const, right=0.5cm of TG](PG) {$\pi_G$};

\edge {EG} {Z0,Z1,Z2,Z3};
\edge {TG,PG} {EG}

%

\plate [inner xsep=0.3cm, inner ysep=0.2cm, yshift=0.1cm] {a}{(Z0)(Z3)(Z1)(Y1)(Z2)(Y2)(EI)}{}; 
\node [left plate caption=a-wrap]{$|I|$};

\plate [inner sep=0.4cm, yshift=0.1cm, xshift=-0.1cm] {b}{(Z0)(Z3)(Z1)(Y1)(Z2)(Y2)(EG)(TI)}{}; 
\node [left plate caption=b-wrap] {$|G|$};
\end{tikzpicture}
}
\end{center}
\caption{
A single state transition in the hierarchical mixed-effect hidden Markov model used in our experiments in Section \ref{sec:mehmmexp} and described below. $\theta$s and $\pi$s are learnable parameters. We omit fixed effects from the diagram as there were none in our experiments.
}
\label{fig:mixed-hmm-model}
\end{figure}

Our models are special cases of a time-inhomogeneous discrete state space model
whose state transition distribution is specified by a hierarchical generalized linear mixed model (GLMM).
At each timestep $t$, for each individual trajectory $b \in I$ in each group $a \in G$ (male and female in our experiments), we have
\begin{multline*}
\text{logit}(p(x^{(t)}_{ab} = \text{state } i \mid x^{(t-1)}_{ab} = \text{state } j)) = \\
\big( \epsilon_{G,a} + \epsilon_{I,ab} + Z^\intercal_{I,ab} \beta_{1} + Z^\intercal_{G,a} \beta_{2} + Z^\intercal_{T,abt} \beta_{3} \big)_{ij}
\end{multline*}
where $a,b$ correspond to plate indices, $\epsilon$s are independent random variables, $Z$s are covariates, and $\beta$s are parameter vectors. 
See Fig.~\ref{fig:mixed-hmm-model} for the corresponding plate diagram.
The models in our experiments did not include fixed effects as there were no covariates $Z$ in the harbour seal data, but they are frequently used in the literature with similar models (see e.g. \cite{towner2016sex}) so we include them in our general definition.

The values of the independent random variable $\epsilon_I$ and $\epsilon_G$ are each sampled from a set of three possible values shared across the individual and group plates, respectively.
That is, for each individual trajectory $b \in I$ in each group $a \in G$, we sample single random effect values for an entire trajectory:
\begin{align*}
\iota_{G,a} &\sim \text{Categorical}(\pi_G) \\
\epsilon_{G,a} &= \theta_{G}[\iota_{G,a}] \\
\iota_{I,ab} &\sim \text{Categorical}(\pi_{I,a}) \\
\epsilon_{I,ab} &= \theta_{I,a}[\iota_{I,ab}]
\end{align*}
Note that each $\epsilon$ is a $D_h \times D_h$ matrix, where $D_h=3$ is the number of hidden states per timestep in the HMM.

Observations $y^{(t)}$ are represented as sequences of real-valued step lengths and turn angles, modelled by zero-inflated Gamma and von Mises likelihoods respectively.
The seal models also include a third observed variable indicating the amount of diving activity between successive locations, which we model with a zero-inflated Beta distribution following \cite{mcclintock2018momentuhmm}.
We grouped animals by sex and implemented versions of this model with (i) no random effects (as a baseline), and with random effects present at the (ii) group, (iii) individual, or (iv) group+individual levels.

We chose the Gamma and von Mises likelihoods because they were the ones most frequently used in other papers describing
the application of similar models to animal movement data, e.g. \cite{towner2016sex};
another combination, used in \cite{mcclintock2018momentuhmm}, modelled step length with a Weibull distribution
and turn angle with a wrapped Cauchy distribution.

Unlike our models, the models in \cite{mcclintock2013combining,mcclintock2018momentuhmm} incorporate substantial additional prior knowledge in the form of hard constraints on various parameters and random variables (e.g. a maximum step length corresponding to the maximum distance a harbour seal can swim in the interval of a single timestep).

\subsubsection{Training details}

We used the Adam optimizer with initial learning rate $0.05$ and default momentum hyperparameters \cite{kingma2014adam}, annealing the learning rate geometrically by a factor of $0.1$ when the training loss stopped decreasing.
We trained the models for $300$ epochs with $5$ restarts from random initializations, using batch gradient descent because the number of individuals ($17$) was relatively small.
The number of random effect parameter values was taken from \cite{mcclintock2018momentuhmm} and all other hyperparameters were set by choosing the best values on the model with no random effects.

\subsection{Sentiment Analysis}

\subsubsection{Dataset details}
Sentences in Sentihood were collected from Yahoo! Answers by filtering for answers about neighbourhoods of London. Specific neighbourhood mentions were replaced with generic \texttt{location1} or \texttt{location2} tokens.
We follow the previous work \cite{sentihood,sentic,rentnet} and restrict training and evaluation to the 4 most common aspects: price, safety, transit-location, and general.

To give a clearer picture of the task, consider the sentence ``\texttt{Location1} is more expensive but has a better quality of life than \texttt{Location2}'', the labels encode that with respect to \texttt{Location1} the sentence is \textit{negative} in aspect \textit{price}, but \textit{positive} in \textit{general}. 
Similarly \texttt{Location2} would have the opposite sentiments in those two aspects. The remaining aspects for both locations would have the \textit{none} sentiment.

We preprocess the text with SpaCy \cite{spacy} and lowercase all words. We append the reserved symbols `$\langle bos \rangle$' and `$\langle eos \rangle$' to the start and end of all sentences.

\subsubsection{Model details}
Our reimplemented \texttt{LSTM-Final} baseline uses a BLSTM on top of word embeddings. The hidden state of the BLSTM is initialized with an embedding of the location and aspect. A projection of the final hidden state is then used to classify the sentence-level sentiment by applying the softmax transformation.
For the \texttt{CRF-LSTM-Diag} model, the potentials of the graphical model are given by:
\begin{equation}
\begin{aligned}
G_t(z_t, l, a, \mathbf{x}) &= W[z_t]^T\textrm{LSTM}(\textrm{Emb}(\mathbf{x}), \textrm{Emb}(l, a))[t])\\
F_t(y, z_t, l, a, \mathbf{x}) &= \textrm{diag}(\theta_{\textrm{none}},\theta_{\textrm{pos}},\theta_{\textrm{neg}})[y,z_t]\\
\end{aligned}
\label{eqn:crf}
\end{equation}
where $W\in\mathbb{R}^{3\times D}$ contains a $D$ dimensional vector for each sentiment, $D$ is the dimensionality of the LSTM, and the output of the LSTM is of dimension $T\times D$. The LSTM function takes as input a sequence of word embeddings as well as an initial hidden state given by embedding the location and aspect. The Emb function projects the words or location and aspect into a low-dimensional representation.
The potentials of the \texttt{CRF-LSTM-LSTM} model are given by:
\begin{equation}
\begin{aligned}
G_t(z_t, a, l,\mathbf{x}) &= W_{a,l}[z_t]^T\textrm{LSTM}(\textrm{Emb}(\mathbf{x}),\textrm{Emb}(l, a))[t])\\
F_t(y, z_t, a, l,\mathbf{x}) &= \begin{pmatrix}
\begin{matrix}\theta_\textrm{none} \end{matrix} & \begin{matrix}0 & 0\end{matrix}\\
\begin{matrix}0\\0 \end{matrix} & 
\Scale[1.3]{M^t}\\
\end{pmatrix}[y,z_t]\\
\end{aligned}
\label{eqn:crf}
\end{equation}
where
$M^t = W_M^T\textrm{LSTM}(\textrm{Emb}(\mathbf{x}),\textrm{Emb}(a,l))[t]$ is a matrix that dictates the interaction between positive/negative word sentiment and positive/negative sentence sentiment. The projection $W_M\in\mathbb{R}^{D\times 2^2}$ where $D$ is the dimensionality of the LSTM's output. $M^t$ is then reshaped into an $2\times2$ matrix. 
The potentials of the \texttt{CRF-Emb-LSTM} model, are similar:
\begin{equation}
\begin{aligned}
G_t(z_t, a, l,\mathbf{x}) &= W_{a,l}[z_t]^T\textrm{Emb}(\mathbf{x})[t])\\
F_t(y, z_t, a, l,\mathbf{x}) &= \begin{pmatrix}
\begin{matrix}\theta_\textrm{none} \end{matrix} & \begin{matrix}0 & 0\end{matrix}\\
\begin{matrix}0\\0 \end{matrix} &
\Scale[1.3]{M^t}\\
\end{pmatrix}[y,z_t]\\
\end{aligned}
\label{eqn:crf}
\end{equation}
where $M^t$ is defined above.

\subsubsection{Training details}
Since we treat each tuple $(\bx,a,l,y)$ as a separate example,
we create a class imbalance problem as most sentiments are \textit{none}.
Thus during training we subsample to ensure that every batch has an equal number of none, positive, and negative sentiments. In each epoch we iterate over all examples from the smallest class, in this case the \textit{negative} class, and subsample the rest. In all experiments we use a batch size of 33 during training. We utilize the Adam optimizer \cite{kingma2014adam} with a learning rate of 0.01 and default momentum parameters. We do not decay the learning rate during training and select the final model based on the validation sentiment accuracy at each epoch. We terminate training after 1000 epochs.

\subsubsection{Model parameters}
For all models we utilize the GloVe 840B 300D embeddings \cite{glove} to initialize
the word embeddings and do not update the word embeddings during training.
Since the dataset is extremely small, we found that the word embeddings had a very large impact on performance.
We also use a 2-layer BLSTM with 50 hidden dimensions.
Dropout is used with probability 0.2 after the embeddings and in the BLSTM.

\subsubsection{Evaluation}
The primary evaluation metrics we use for Sentihood are the sentiment accuracy and the aspect macro-F1 score. We calculate accuracy over examples with non-`none' gold sentiment labels. Similarly, we follow Ma et. al. \cite{sentic} in their calculation of the macro F1 score by predicting the sentiment of all location and aspect pairs for a given sentence and using the number of correctly predicted sentiments among the non-none gold sentiments for the precision and recall. We ignore sentences with no non-none gold sentiments.

\section{Walking through Algorithm~\ref{alg:plated-sumproduct}}

\subsection{Walking through the intractable Example~\ref{ex:intractable}}
\label{sec:walkthrough-1}

Consider the intractable model Example~\ref{ex:intractable}, which is the minimal plated factor graph for which Algorithm~\ref{alg:plated-sumproduct} fails:
\begin{align*}
    V &= \{x,y\}\\
    F &= \{f_{xy}\}\\
    E &= \{(x,f_{xy}), (y,f_{xy})\} \\
    P &= \{(x, \{a\}), (y,\{b\}), (f_{xy},\{a,b\})\}
\end{align*}
On the first pass through the \textbf{while} loop, there is a single choice of leaf and a single connected component
\begin{align*}
    L &\from\{a,b\} \\
    V_L &\from \{\} &&= V_c \\
    F_L &\from \{f\} &&= F_c \\
    E_L &\from \{\} &&= E_c
\end{align*}
At this point no variable can be eliminated since $V_c=\emptyset$:
\begin{align*}
    f &\from \textsc{SumProduct}(\{f_{xy}\}, \emptyset) &&= f_{xy} \\
    V_f &\from \{x,y\}
\end{align*}
Since $V_f$ is not empty, we search for a next plate set but find
\begin{align*}
    L' \from P(x) \cup P(y) \quad = \{a\} \cup \{b\} \quad = L
\end{align*}
Now since $L'=L$ the algorithm cannot progress and results in \textbf{error}.

\subsection{Walking through the experimental model \ref{sec:performance}}
\label{sec:walkthrough-3}

Consider the experimental model of \ref{sec:performance} with
\begin{align*}
    V &= \{v,w,x,y,z\}\\
    F &= \{f_{vw}, f_{wx}, f_x, f_{xy}, f_{yz}\}\\
    E &= \{(v,f_{vw}), (w, f_{vw}), (w, f_{wx}), (x, f_{wx}), (x, f_x),\\
      &\quad\quad (x, f_{xy}), (y, f_{xy}), (y, f_{yz}), (z, f_{yz})\} \\
    P &= \{(v, \{a,b\}), (w,\{a\}), (x,\{\}), (y,\{b\}), (z, \{a,b\}),
      \\&\quad\quad
      (f_{vw}, \{a,b\}),
      (f_{wx}, \{a\}),
      (f_x, \{\}),
      \\&\quad\quad
      (f_{xy}, \{b\}),
      (f_{yz}, \{a,b\})
      \}
\end{align*}
On the first pass through the \textbf{while} loop, there is a single choice of leaf:
\begin{align*}
    L & =\{a,b\} \\
    V_L &= \{v,z\} \\
    F_L &= \{f_{vw}, f_{yz}\} \\
    E_L & = \{(v,f_{vw}), (z,f_{yz})\}
\end{align*}
There will be two connected components, one with $v$ and one with $z$.
We process them in an arbitrary order.
\begin{enumerate}
    \item Connected component $V_c=\{v\},\,F_c=\{f_{vw}\}$:\\
    We first eliminate the variable $v$ via a sum-reduction and record that variable $w$ remains to be eliminated:
    \begin{align*}
        f &\from \textsc{SumProduct}(\{f_{vw}\}, \{v\}) \\
        V_f &\from \{w\}
    \end{align*}
    In searching for the next plate set, we find ${L' \from \{a\}}$ and product-reduce plate $b$:
    \[
        f'\from \textsc{Product}(f,\{b\},M)
    \]
    After adding the new factor $f'=:\hat f_w$ and updating data structures, we have
    \begin{align*}
        V &= \{w,x,y,z\}\\
        F &= \{\hat f_w, f_{wx}, f_x, f_{xy}, f_{yz}\}\\
        E &= \{(w,\hat f_w), (w, f_{vw}), (w, f_{wx}), (x, f_{wx}), (x, f_x),\\
          &\quad\quad (x, f_{xy}), (y, f_{xy}), (y, f_{yz}), (z, f_{yz})\} \\
        P &= \{(w,\{a\}), (x,\{\}), (y,\{b\}), (z, \{a,b\}),
          \\&\quad\quad
          (\hat f_w, \{a,b\}),
          (f_{wx}, \{a\}),
          (f_x, \{\}),
          \\&\quad\quad
          (f_{xy}, \{b\}),
          (f_{yz}, \{a,b\})
          \}
    \end{align*}
    \item Connected component $V_c=\{z\},\,F_c=\{f_{yz}\}$:\\
    Operating similarly with $z$ we have
    \begin{align*}
        V &= \{w,x,y\}\\
        F &= \{\hat f_w, f_{wx}, f_x, f_{xy}, \hat f_y\}\\
        E &= \{(w,\hat f_w), (w, f_{vw}), (w, f_{wx}), (x, f_{wx}), (x, f_x),\\
          &\quad\quad (x, f_{xy}), (y, f_{xy}), (y, \hat f_y)\} \\
        P &= \{(w,\{a\}), (x,\{\}), (y,\{b\})),
          \\&\quad\quad
          (\hat f_w, \{a,b\}),
          (f_{wx}, \{a\}),
          (f_x, \{\}),
          \\&\quad\quad
          (f_{xy}, \{b\}),
          (\hat f_y, \{b\})
          \}
    \end{align*}
\end{enumerate}

On the second pass through the \textbf{while} loop, there are two possible leaves, $L=\{a\}$ or $L=\{b\}$.
Arbitrarily choosing leaf $a$, we set
\begin{align*}
    L &= \{a\} \\
    V_L &= \{w\} \\
    F_L &= \{\hat f_w, f_{wx}\} \\
    E_L & = \{(w, \hat f_w), (w,f_{wx})\}
\end{align*}
There is a single connected component with ${V_c=V_L}$, ${F_c=F_L}$.
We eliminate variable $w$ via a vector-matrix multiply  and record that variable $x$ remains to be eliminated:
\begin{align*}
    f &\from \textsc{SumProduct}(\{\hat f_w, f_{wx}\}, \{w\}) \\
    V_f &\from \{x\}
\end{align*}
The next plate set is $L'\from\emptyset$, so we product-reduce plate $a$:
\[
  f'\from \textsc{Product}(f,\{a\},M)
\]
After adding the new factor $f'=:\hat f_x$ and updating data structures, we have
\begin{align*}
    V &= \{x,y\}\\
    F &= \{\hat f_x, f_x, f_{xy}, \hat f_y\}\\
    E &= \{(x,\hat f_x), (x, f_x), (x, f_{xy}), (y, f_{xy}), (y, \hat f_y)\} \\
    P &= \{(x,\{\}), (y,\{b\}),
      \\&\quad\quad
      (\hat f_x, \{\}),
      (f_x, \{\}),
      (f_{xy}, \{b\}),
      (\hat f_y, \{b\})
      \}
\end{align*}

On the third pass through the \textbf{while} loop we choose ${L\from\{b\}}$ and similarly eliminate $y$, resulting in
\begin{align*}
    V &= \{x\}\\
    F &= \{\hat f_x, f_x, \hat f_x'\}\\
    E &= \{(x,\hat f_x), (x, f_x), (x, \hat f_x')\} \\
    P &= \{(x,\{\}),
      (\hat f_x, \{\}),
      (f_x, \{\}),
      (\hat f_x', \{\})
      \}
\end{align*}

On the final pass through the \textbf{while} loop, we choose ${L\from \emptyset}$.
We eliminate $x$ via a three-way dot product and record that no more variables remain to eliminate:
\begin{align*}
    f &\from \textsc{SumProduct}(\{\hat f_w, f_x, \hat f_x'\}, \{x\}) \\
    V_f &\from \emptyset
\end{align*}
Since $V_f$ is empty we add the new factor ${\textsc{Product}(f, \emptyset, M)=:\hat f}$ to scalars $S$.
Note that in this example the final \textsc{Product} is a no-op, however it would have been nontrivial if some plate had contained all variables, as e.g. happens when training on a minibatch of data.

Finally we \textbf{return} $\textsc{SumProduct}(\{f\}, \emptyset)$; again this final operation is a no-op since there was only one connected component in the input plated factor graph.

\section{Plates in Pyro}
\label{sec:pyro-plate}

The Pyro probabilistic programming language provides a Python context manager \lstinline$pyro.plate$ to declare that a portion of a probabilistic model is replicated over a tensor dimension and is statistically independent over that dimension.
An example of an independent dimension is the index over data in a minibatch: each datum should be independent of all others.

To declare a component of a model as plated, a user writes sample statements in a context, e.g.
\begin{lstlisting}[language=Python]
with pyro.plate("my_plate", 100, dim=-1):
    x = pyro.sample("x", Bernoulli(0.5))
    y = pyro.sample("y", Bernoulli(0.1 + 0.8 * x))
\end{lstlisting}
In the above example, the distribution at sample site \lstinline$"x"$ is expanded from 1 to 100 conditinally independent samples, and \lstinline$x$ will have shape \lstinline$(100,)$.
The Bernoulli distribution over \lstinline$x$ is already batched because its parameters depend on \lstinline$x$, hence it does not need to be expanded.

Plates can be nested to account for multiple dimensions
\begin{lstlisting}[language=Python]
with pyro.plate("x_axis", 320, dim=-1):
    # within this context, batch dimension -1 is independent
    with pyro.plate("y_axis", 200, dim=-2):
        # within this context, batch dimensions -2 and -1 are independent
\end{lstlisting}
Note that dimensions use negative indices to follow the NumPy convention of counting from the right of a tensor shape; this allows indices to be compatible with tensor broadcasting.

To create overlapping plates with non-strictly-nested relationship, users can create the context managers beforehand and enter the appropriate contexts at each sample statement.
For example in a model where some noise depends on an x position, some noise depends only on y position, and some noise depends on both, users can write:
\begin{lstlisting}[language=Python]
x_axis = pyro.plate("x_axis", 320, dim=-2)
y_axis = pyro.plate("y_axis", 200, dim=-3)
with x_axis:
    # within this context, batch dimension -2 is independent
with y_axis:
    # within this context, batch dimension -3 is independent
with x_axis, y_axis:
    # within this context, batch dimensions -3 and -2 are independent
\end{lstlisting}

\end{document}